\begin{document}

\newtheorem{theorem}{Theorem}
\newtheorem{lemma}{Lemma}
\newtheorem{claim}[lemma]{Claim}
\newtheorem{proposition}{Proposition}
\newtheorem{fact}[lemma]{Fact}
\newtheorem{corollary}{Corollary}
\newtheorem{conjecture}{Conjecture}

\newtheorem{definition}{Definition}

\newcommand{\real}{\mathbb{R}}
\newcommand{\ball}{\mathbb{B}}
\newcommand{\expect}{\mathbb{E}}
\date{}

\title{\Large \bf DPAF: Image Synthesis via Differentially Private Aggregation in Forward Phase}

\author{Chih-Hsun Lin, Chia-Yi Hsu, Chia-Mu Yu, Yang Cao, Chun-Ying Huang
}

\maketitle

\thispagestyle{empty}

\begin{abstract}
Differentially private synthetic data is a promising alternative for sensitive data release. Many differentially private generative models have been proposed in the literature. Unfortunately, they all suffer from the low utility of the synthetic data, particularly for images of high resolutions. Here, we propose DPAF, an effective differentially private generative model for high-dimensional image synthesis. Different from the prior private stochastic gradient descent-based methods that add Gaussian noises in the backward phase during the model training, DPAF adds a differentially private feature aggregation in the forward phase, bringing advantages, including the reduction of information loss in gradient clipping and low sensitivity for the aggregation. Moreover, as an improper batch size has an adverse impact on the utility of synthetic data, DPAF also tackles the problem of setting a proper batch size by proposing a novel training strategy that asymmetrically trains different parts of the discriminator. We extensively evaluate different methods on multiple
image datasets (up to images of $128\times 128$ resolution) to demonstrate the performance of DPAF.
\end{abstract}

\section{Introduction}
The advance of deep learning (DL) has witnessed the success of a variety of domains, such as computer vision, natural language, and speech recognition. Training deep neural networks (DNNs) requires a large amount of high-quality data. Unfortunately, lots of valuable data are privacy-sensitive, and the direct release of the data becomes infeasible. Synthetic data has been proposed as a means to overcome the above difficulty. In particular, synthetic data from generative models may possess the same statistical information as the original data and, therefore, leads to great data utility, which means that the analytical result made on synthetic data is similar to the one made on the original data. Synthetic data sampled from the data distribution estimated by generative models is usually assumed to be decoupled from the original data and, therefore, implies privacy. However, recent studies reveal that the generative models still leak privacy information~\cite{gan-leaks, logan, MCMCMIA}. 

Differential privacy (DP)~\cite{dwork2014algorithmic} has been the gloden standard for data privacy. Differentially private deep learning (DPDL), a variant of deep learning with the DP guarantee, can be used to provably preserve the privacy of DL models. For example, Abadi et al.~\cite{Abadi2016DeepLW} propose differentially private stochastic gradient descent (DPSGD) to train a DP classifier by adding the Gaussian noise on the clipped gradient. A majority of subsequent works apply DPSGD during the training of generative adversarial networks (GANs) to derive differentially private generative adversarial networks (DPGANs). Nevertheless, the gradient clipping in DPSGD incurs significant information loss, considerably compromising model accuracy. To enhance the utility of DPDL models, PATE~\cite{papernot2017semisupervised} is proposed as an alternative of DPSGD by considering the aggregation of multiple teacher models trained on the partitioned private data, with the advantage of no information loss. Nonetheless, despite no information loss, applying PATE to generative models is by no means trivial, because the student generator may output synthetic samples with similar labels and, consequently, teacher models cannot learn to be updated effectively~\cite{pate-gan}. 

Though DPSGD remains the most popular method to train a DPDL model, DPSGD has both negative~\cite{Disparate} and positive~\cite{removingDisparate, unlocking} impacts on the model utility. Thus, having a proper design of DPGANs is far from trivial. In this work, we aim to develop a DPGAN for image synthesis by resorting to DPSGD. The primary objective of our work is to generate high-dimensional images in a DP manner such that the downstream classification task can have high classification accuracy. As mentioned above, the utility of DPSGD-based DPGANs is degraded due to two factors, the information loss in gradient clipping and DP noise. In this sense, our proposed techniques are designed to reduce both the information loss (by minimizing the size of the gradient vector) in gradient clipping and DP noise (by minimizing the global sensitivity). 

Several efforts have been made to reduce the information loss of DPSGD. For example, GS-WGAN~\cite{gs-wgan} minimizes the information loss by taking advantage of the 1-Lipschitz property of WGAN~\cite{wgan}. However, its generative capability is also limited by the WGAN structure. DataLens~\cite{datalens} compresses the gradient vector by keeping only the top-$k$ values and then quantizing them. As the non-top-$k$ values in the gradient vector do not involve gradient clipping, more informative magnitudes from top-$k$ values can be kept. Nonetheless, the success of DataLens heavily relies on training a large number of teacher classifiers, hindering the practicality due to large amounts of GPU memory. 

Our method takes a fundamentally different approach; i.e., we conduct a DP feature aggregation in the forward phase during the training of NN. The post-processing of DP ensures that the number of dimensions of the gradient vector can be effectively reduced. On the other hand, choosing a large batch size is a straightforward manner to reduce the negative impact of DP noise because the DP noise may be easier to cancel each other out. Unfortunately, a large batch size leads to much fewer updates of the discriminator and thus the training cannot converge given a fixed number of epochs. In addition, because of the feature aggregation in our proposed method, the large batch size also easily makes features indistinguishable, which is harmful to the synthetic data utility and, in turn, favors a small batch size instead. We tackle this dilemma by conducting a crafted design of training strategy. 

\noindent\textbf{Contribution.} The contributions are summarized below.
\begin{itemize}[leftmargin=*]

    \item We propose DPAF (\textbf{D}ifferentially \textbf{P}rivate \textbf{A}ggregation in \textbf{F}orward phase), an effective generative model for differentially private image synthesis. DPAF supports the conditional generation of high-dimensional images. 

    \item We propose a novel framework to enforce DP during the GAN training. In particular, we propose to place a differentially private feature aggregation (DPAGG) in the forward phase. Together with a simplified instance normalization (SIN), DPAGG can not only have a natural and low global sensitivity but also significantly reduce the dimensionality of the gradient vector. We also have a novel design of the asymmetric model training, resolving the dilemma that a small batch cannot effectively reduce the DP noise but a large batch will make features indistinguishable. 

    \item We formally prove the privacy guarantee of DPAF. Furthermore, we conduct extensive experiments on DPAF with four popular image datasets, including MNIST ($28\times 28$), Fashion-MNIST ($28\times 28$), CelebA (rescaled to $64\times 64$), and FFHQ (rescaled to $128\times 128$). Our experiment results show that DPAF outperforms the prior solutions on CelebA and FFHQ in terms of both classification accuracy (e.g.,14.57\% improvement on CelebA-Gender with privacy budget $\varepsilon=1$ compared to SOTA) and visual quality, as DPAF is featured by its unique characteristic that the generative capability will be maximized for larger images. 
    
\end{itemize}

\section{Related Work}\label{sec: Related Work}
This paper primarily studies how to synthesize images with the DP guarantee. The works for DP tabular data synthesis~\cite{zhang2021privsyn, Mckenna2019PGM, privbayes} are not in our consideration. Though DP generative models and DP classifiers~\cite{unlocking, Tramr2020DifferentiallyPL, Large-scale-private-learning, yu2022differentially, Large-Language-Models} share DPSGD-based training strategies, the design of DP classifiers is beyond of scope in this paper. 

\vspace{-0.2cm}\paragraph{Differentially Private Generative Models.} Differentially private stochastic gradient descent (DPSGD)~\cite{Abadi2016DeepLW} and private aggregation of teacher ensembles (PATE)~\cite{papernot2017semisupervised,papernot2018scalable} are two common techniques that achieve DP generative models. The former trains a model by injecting DP noise into the gradient in each training iteration. More specifically, the gradient is first clipped to ensure a controllable global sensitivity and then perturbed by a proper Gaussian noise. The latter partitions the sensitive dataset into disjoint subsets and trains a teacher model for each subset. The noisy aggregated teachers vote to determine the class for the unlabeled public data.

\vspace{-0.2cm}\paragraph{DPSGD-Based Approach.} Gradient clipping, though beneficial to reducing global sensitivity, leads to a dramatic information loss and, therefore, hurts the model utility. Thus, a majority of works focus on improving DPSGD. Zhang et al.~\cite{zhang2018} cluster the parameters with similar clipping bounds and add DP noise to different parts of the gradient. To have a better control of the noise scale, McMahan et al.~\cite{McMahan2018AGA} and Thakkar et al.~\cite{Thakkar2019DifferentiallyPL} calculate the clipping bound via adaptive clipping. GANobfuscator~\cite{Xu2019GANobfuscator}, GS-WGAN~\cite{gs-wgan}, and Xie et al.'s method~\cite{xie2018}, all built upon the improved
WGAN framework~\cite{wgan}, reduce the noise scale by capitalizing on the Lipschitz property. Moreover, GS-WGAN argues that because only the generator will be released, the discriminator does not need to be trained via DPSGD. By projecting the gradients onto a predefined subspace~\cite{Yu2021DoNL,nasr2020fakegradient}, one can reduce the global sensitivity to lower the DP noise scale. 

\vspace{-0.2cm}\paragraph{PATE-Based Approach.} Through the teachers ensemble, PATE-GAN~\cite{pate-gan} uses the noisy labels on the samples generated by the generator to train both the generator and discriminator based on the PATE framework. However, an inherent assumption behind PATE-GAN is that the generator is able to generate samples from the entire real sample space, which is not always valid. G-PATE~\cite{g-pate} achieves DP on the aggregated gradient from teachers ensemble when backpropagating to update the generator, through the Confident-GNMax aggregator~\cite{papernot2018scalable}. 
Similar to G-PATE~\cite{g-pate}, DataLens~\cite{datalens} enforces DP on the aggregated gradient from teachers ensemble, but the aggregated gradient will be compressed and quantized to limit the information loss of gradient clipping. 

\vspace{-0.2cm}\paragraph{Non-Adversarial Learning-based Approach.} Adversarial learning (e.g., GAN) has a notorious reputation on training instability. Thus, some methods do not rely on adversarial learning. For example, DP-MERF~\cite{harder2021dpmerf} trains the generator by minimizing the maximum mean discrepancy (MMD) between the data distribution and generator distribution. By following a framework similar to DP-MERF, DP-Sinkhorn~\cite{dp-sinkhorn} and PEARL~\cite{pearl} train the generator by minimizing the Sinkhorn divergence with semi-debiased Sinkhorn loss and by minimizing the characteristic function distance, respectively. P3GM~\cite{p3gm} considers a two-phase training for the variational autoencoder (VAE). In particular, P3GM trains the encoder first and then trains the decoder with the frozen encoder. Such a two-phase training increases the robustness against the noise for DP. Similar to P3GM, DP$^2$-VAE~\cite{dp2-vae} and DPD-fVAE~\cite{dpd-fvae} are also VAE-based approaches. DPGEN~\cite{dpgen} takes a different approach; it applies randomized response (RR)~\cite{dwork2014algorithmic} to the movement direction of the Markov Chain Monte Carlo, avoiding the information loss of gradient clipping in DPSGD. Unfortunately, it supports only unconditional generation and, consequently, labeling the synthetic data consumes extra privacy. Very recently, DPDC~\cite{dpdc} is proposed to privately synthesize data by taking advantage of the dataset condensation. The simplest form of DPDC is to apply Gaussian noise to the aggregated gradient, which is the sum of the gradient of random samples from a specific class. Due to not only the sequential denoising behavior but also the non-adversarial learning, DPDM~\cite{dpdm}, built upon the diffusion model, is more robust to the DP noise. 


\section{Preliminary}\label{sec: Preliminary}
Here, we introduce some necessary technical backgrounds for DPAF. First, we formulate DP. Then, we briefly describe generative models and transfer learning. Afterward, we will introduce how to train a neural network in a DP manner. 

\vspace{-0.2cm}\paragraph{Differential Privacy.} $(\varepsilon, \delta)$-differential privacy~\cite{dwork2014algorithmic}, $(\varepsilon, \delta)$-DP, is the \textit{de facto} standard for data privacy. The privacy of $(\varepsilon, \delta)$-DP comes from restricting the contribution of an input sample to the output distribution. More specifically, $\varepsilon>0$ bounds the log-likelihood ratio of any particular output when the algorithm is executed on two datasets differing in a sample, while $\delta\in [0, 1]$ is a probability that certain outputs violate the above bound. $(\varepsilon, \delta)$-DP is formulated as follows.

\begin{definition} An algorithm $\mathcal{M}$ is $(\varepsilon, \delta)$-DP if for all $\mathcal{S}\subseteq \text{Range}(\mathcal{M})$ and for any neighboring datasets $\mathcal{D}$ and $\mathcal{D}'$,
\begin{align}\label{eq: DP}
\text{Pr}[\mathcal{M}(\mathcal{D})\in \mathcal{S}]\leq e^{\varepsilon} \text{Pr}[\mathcal{M}(\mathcal{D}')\in \mathcal{S}]+\delta.
\end{align}\label{def: DP}
\vspace{-0.7cm}
\end{definition}
Throughout this paper, $\mathcal{D}$ and $\mathcal{D}'$ are neighboring if $\mathcal{D}$ can be obtained by adding or removing one sample from $\mathcal{D}'$ in the unbounded DP sense \cite{nofreelunch}. DP can quantify privacy loss; i.e., $\varepsilon$ in Eq. (\ref{eq: DP}), called \textit{privacy budget}, measures the \textit{privacy loss}. In essence, privacy goes worse with an accumulated $\varepsilon$. DP can be achieved by applying Gaussian mechanism $G_\sigma$, where the zero-mean Gaussian noise with a proper variance is added to the algorithm output. More specifically, given a function $f$, $G_\sigma\circ f(x)\triangleq f(x)+N(0, \sigma^2)$ obeys $(\varepsilon, \delta)$-DP for all $\varepsilon<1$ and $\sigma>\sqrt{2\ln 1.25}\Delta_{2,f}/\varepsilon$, where $f$'s $\ell_2$-sensitivity is defined as $\Delta_{2, f}\triangleq \max_{\mathcal{D}, \mathcal{D}'}||f(\mathcal{D})-f(\mathcal{D}')||_2$ for neighboring $\mathcal{D}$ and $\mathcal{D}'$.

DP has the following useful properties. First, repeatedly accessing sensitive data leads to an accumulation of privacy loss. Such a privacy loss can be bounded by the sequential composition theorem or higher-order techniques (e.g., moments accountant~\cite{connect-the-dots, Abadi2016DeepLW}). Second, DP is not affected by post-processing. Formally, $g\circ \mathcal{M}$ for any data-independent mapping $g$ still satisfies $(\varepsilon, \delta)$-DP, given that $\mathcal{M}$ is $(\varepsilon, \delta)$-DP.

\vspace{-0.2cm}\paragraph{Generative Adversarial Network.} While generative models are used to derive the data distribution behind the dataset, generative adversarial network (GAN) is the most popular generative model in the DL literature. We may derive the data distribution through GANs and then perform the sampling to obtain a synthetic dataset in a non-private setting.

GAN consists of two components, a generator $G$ and a discriminator $D$. In particular, $G$ learns to output synthetic samples, while $D$ taking as inputs both the synthetic samples and the sensitive dataset is trained to tell real samples apart from the fake ones. Formally, given the real sample $x$ and a sampled noise $z$, $D$ is trained with the loss function $\mathcal{L}_D=-\log D(x)-\log(1-D(G(z)))$. On the other hand, $G$ is trained with the loss function $\mathcal{L}_G=-\log D(g(z))$.

GAN can be extended to conditional GAN (cGAN)~\cite{cgan}. The difference between GAN and cGAN is that the latter is able to generate samples with a specific class (aka. conditional generation), while the former cannot. cGAN can also be instantiated by the above generator-discriminator architecture. In addition, the class label will usually be added to either the generator or discriminator (or both) to ensure that the synthetic sample is consistent with the input class label. Examples of cGANs include AC-GAN~\cite{acgan} and cDCGAN~\cite{cdcgan, cgan}. 

\vspace{-0.2cm}\paragraph{Transfer Learning.} Usually, a DNN consists of two parts, feature extractor (FE) and label predictor. Transfer learning aims to effortlessly train a target model for a specific task by leveraging the pre-trained FE for another but related task. In particular, the target model is composed of the pre-trained FE and a randomly initialized label predictor. During the training of the target model, the FE is frozen and only the label predictor is updated with the training set from the target task. Formally, let $M(x)$ be the function of the target model with $x$ as the input vector. $M$ can be represented as $M(x) = M_c(M_f(x, \theta_f), \theta_c)$, where $M_f$, $M_c$, $\theta_f$, and $\theta_c$ denote the FE, the label predictor, the parameters of the FE, and the parameters of the label predictor, respectively. A typical approach to transfer learning is to optimize the objective, $\min_{\theta_c}\sum_{x\in X,y\in Y} L(M_c(M_f(x, \theta_{f0}), \theta_c), y)$, where $\theta_{f0}$ denotes the parameters of the pre-trained FE, $L$ is a loss function, and $X$ and $Y$ are the training data and labels, respectively. 

\vspace{-0.2cm}\paragraph{Differentially Private Stochastic Gradient Descent.} Differentially private stochastic gradient descent (DPSGD) is the most popular technique to train a DPDL model. Given a training set $\{(x_i, y_i)\}_{i=1}^N$, the update the ordinary stochastic gradient descent (SGD) is formulated as $w^{t+1}=w^{(t)}-\eta_t \frac{1}{B}\sum_{i\in \mathcal{B}_t}\nabla \mathcal{L}(w^{(t)}, x_i, y_i)$, where $\mathcal{L}(w, x, y)$ is the loss function with the model parameter $w$, input sample $x$, and label $y$, and $\mathcal{B}_t$ is the set of samples at iteration $t$ with $\mathcal{B}_t=B$. Because the gradient has an unbounded sensitivity, we have to clip the gradient to ensure an bounded DP noise magnitude. Formally, the update of DPSGD can be formulated as follows.
\begin{small}
\begin{align}
w^{t+1}=w^{(t)}-\eta_t\left\{ \frac{1}{B}\sum_{i\in \mathcal{B}_t}\text{clip}_u\left( \nabla \mathcal{L}(w^{(t)}, x_i, y_i) \right)+\frac{\sigma u}{B}\xi \right\},
\label{eq: DPSGD}
\end{align}\end{small} where $\xi$ is sampled from the zero-mean Gaussian distribution, $\sigma$ specifies the standard deviation of the added noise, and $\text{clip}_u$ is defined as $\text{clip}_u(v)=\min\{1, \frac{u}{||v||_2}\} \cdot v$ with $u$ as a manually configurated clipping threshold. 

\section{Threat Model}
In practice, DL models are usually trained with privacy-sensitive data. Thus, keeping the training data private is a major concern before the real-world deployment of DL-based systems. Unfortunately, given access to the classifiers only, the attacker can still infer whether a specific sample is in training set by launching a membership inference attack (MIA)~\cite{mia, enhanced-mia}. It has been proven that MIA can apply to not only classifiers~\cite{mia} but also GANs~\cite{gan-leaks, logan, MCMCMIA}. Furthermore, compared to MIAs, model inversion attacks~\cite{model-inversion-attack} aim to recover the training samples. Though certain defenses have been developed, the reconstruction attack can still work~\cite{isprivatelearning}.

\textbf{Goal.} DP can protect against MIAs and training data memorization. Thus, our goal is to ensure the DP guarantee of the GANs, while preserving high utility. More specifically, we assume that only the generator of a GAN will be released for data synthesis. In other words, the attacker does not have access to the discriminator. An attacker's access to the generator should not lead to the privacy leakage of training samples, even though the attacker launches MIAs and the above data recovery attacks. Notably, keeping high utility while ensuring privacy in DPGANs is highly challenging, because the DP noise may dramatically hinder the training of DPGANs. Here, the utility in our consideration includes the predicting accuracy of the classifier trained by synthetic samples and tested by real samples and the visual quality of synthetic samples.

\section{Proposed Method}\label{sec: Proposed Method}
\subsection{Overview}\label{sec: Overview}
In contrast to the prior DPGANs, DPAF has a fundamentally different design, where a DP feature aggregation is performed in the forward phase. The aggregated feature makes the image features more robust against the DP noise. On the other hand, the DP feature aggregation in the forward phase implies a shortened gradient vector, resulting in a significant reduction of information loss in gradient clipping. DPAF is also featured by the use of a simplified instance normalization, which preserves fine-grained features and reduces $\ell_2$-sensitivity. Overall, the five advantages of conducting DP feature aggregation in the forward phase are summarized below. 

\begin{figure}
     \centering
     \begin{subfigure}[b]{0.19\textwidth}
         \centering
         \includegraphics[width=\textwidth]{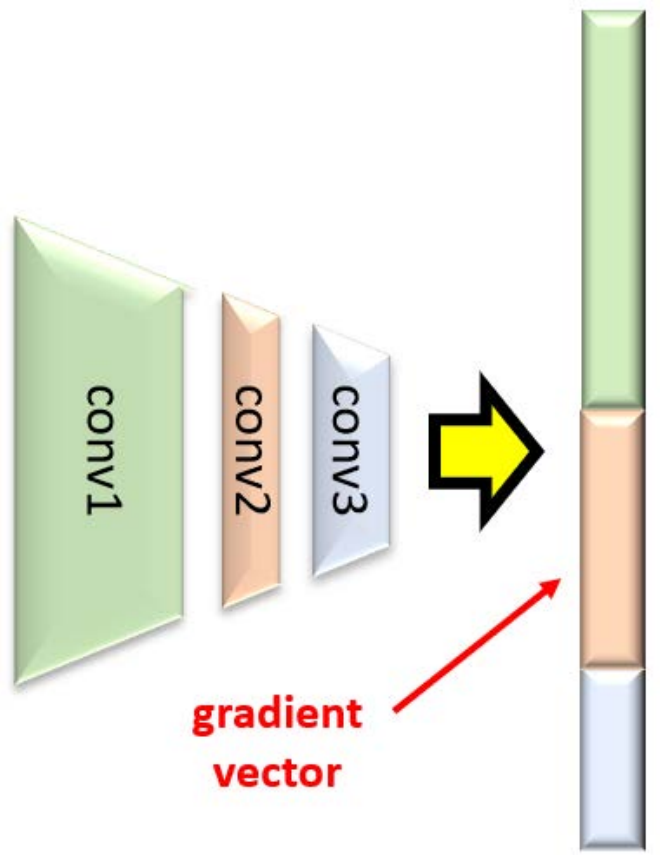}
         \caption{w/o DP aggregation.}
         \label{fig: idea1a}
     \end{subfigure}
     \hspace{0.5cm}
     \begin{subfigure}[b]{0.19\textwidth}
         \centering
         \includegraphics[width=\textwidth]{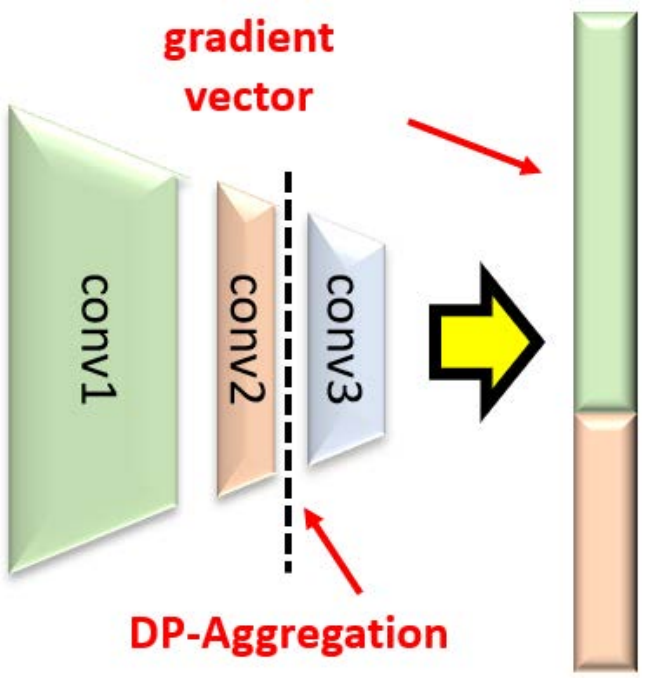}
         \caption{w/ DP aggregation.}
         \label{fig: idea1b}
     \end{subfigure}
     \caption{The illustration of the impact of DP feature aggregation on the size of the gradient vector.}\label{fig: idea1}
\end{figure}

\vspace{-0.2cm}\paragraph{(1) Reduction of Information Loss in Gradient Clipping.} The first advantage is the dimensionality reduction of the gradient vector, as illustrated in Figure~\ref{fig: idea1}. More specifically, gradient clipping in DPSGD inevitably leads to information loss. However, gradient clipping has less impact on shorter gradient vectors, resulting in less information loss. As shown in Figure~\ref{fig: idea1a}, the gradient vector that needs to be sanitized will be lengthier if the DP feature aggregation is not used. On the contrary, as shown in Figure~\ref{fig: idea1b}, conv3 has been privatized after the DP feature aggregation due to the post-processing property of DP, and can be updated by SGD. As a result, because only conv1 and conv2 need to be updated through DPSGD, the information loss from gradient clipping can be mitigated. 

\begin{figure}
     \centering
     \begin{subfigure}[b]{0.19\textwidth}
         \centering
         \includegraphics[width=\textwidth]{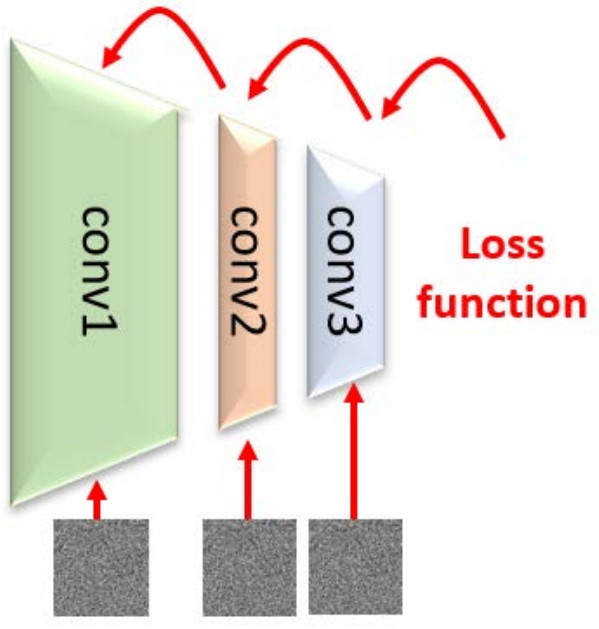}
         \caption{w/o DP aggregation.}
         \label{fig: idea2a}
     \end{subfigure}
     \hspace{0.5cm}
     \begin{subfigure}[b]{0.19\textwidth}
         \centering
         \includegraphics[width=\textwidth]{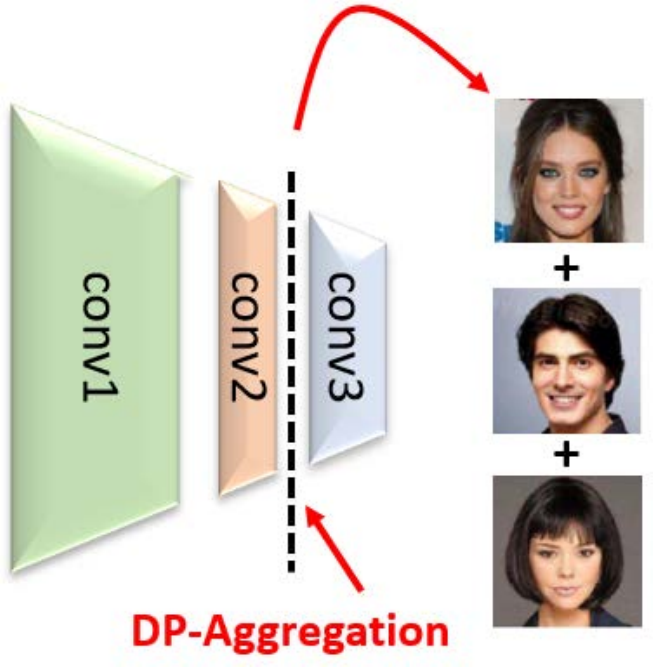}
         \caption{w/ DP aggregation.}
         \label{fig: idea2b}
     \end{subfigure}
     \caption{The impact of DP feature aggregation on the gradient structure preservation during the backpropation.}\label{fig: idea2}
\end{figure}

\vspace{-0.2cm}\paragraph{(2) Better Preserving Gradient Structure.} During the backpropagation, DPSGD applies noise to weights in a layer-by-layer manner, as shown in Figure~\ref{fig: idea2a}, which makes the training harder because such an update process destroys the inherent structure of the gradient vector. Thus, the second advantage is to better preserve the gradient structure. This can be attributed to the fact that the DP aggregated vector is a vector of aggregated noisy image features. As shown in Figure~\ref{fig: idea2b}, since the aggregated features still have the inherent semantics, the corresponding noisy version remains meaningful. On the other hand, also as shown in Figure~\ref{fig: idea2b}, conv3 can be updated by SGD, rather than DPSGD, better preserving the inherent gradient structure. Though still only layers (e.g., conv2) need to be updated by DPSGD, and hence only a small fraction of parameter structure will be affected by DPSGD. 

\begin{figure}[t]
\includegraphics[width=8cm]{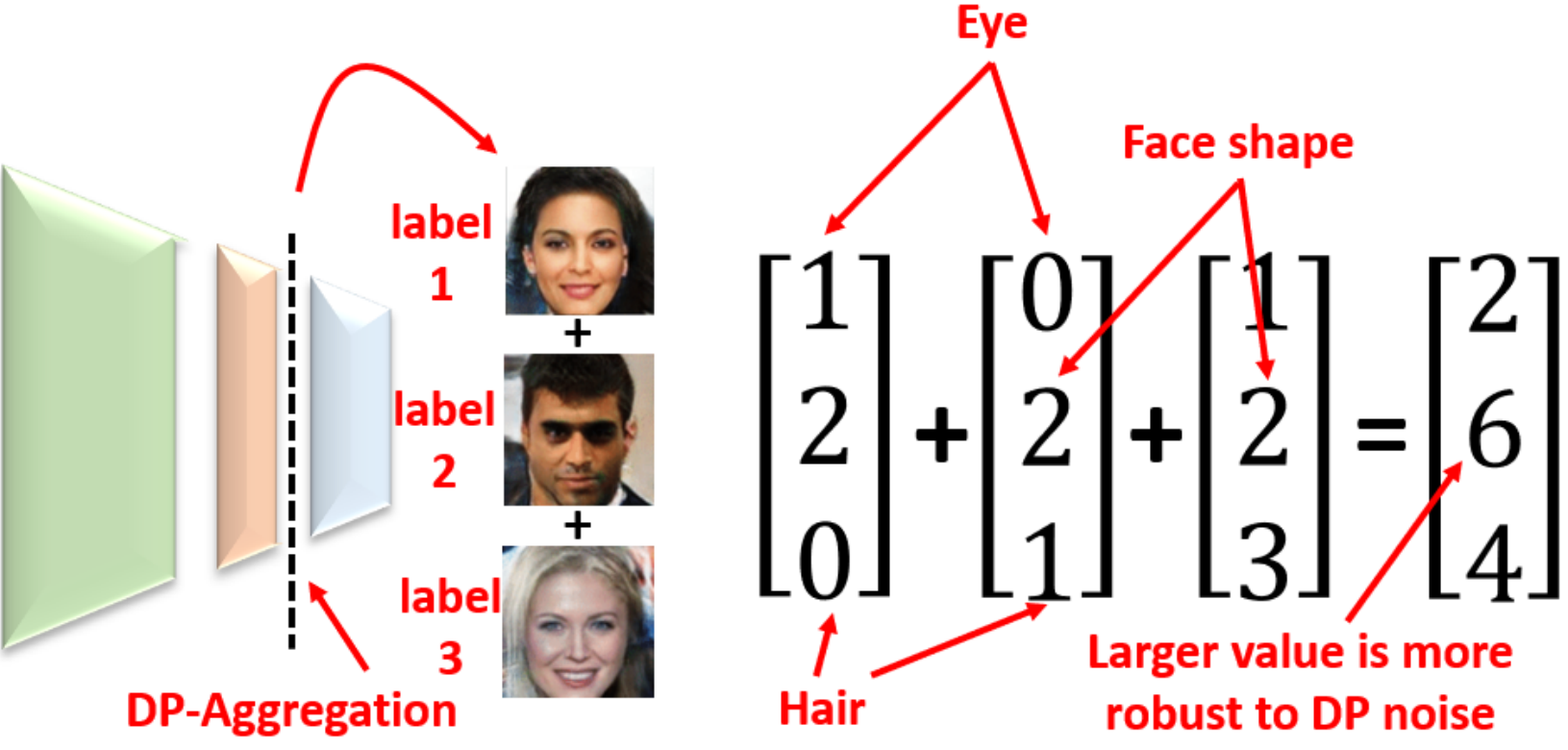}
\centering
\caption{The illustration of the better robustness against the DP noise after the feature aggregation.}
\label{fig: idea3}
\end{figure}

\vspace{-0.2cm}\paragraph{(3) Better Robustness against DP Noise.} The aggregation of the features from samples makes the aggregated feature more robust to the DP noise, because the DP noise is added after the feature value summation. This is illustrated in Figure~\ref{fig: idea3}, where each individual feature is relatively susceptible to the DP noise, but the aggregated one has the larger values and, as a result, better robustness.

\vspace{-0.2cm}\paragraph{(4) Better Preserving Image Features.} From Figure~\ref{fig: idea3}, we know that the aggregated feature vector robust to the DP noise helps in synthesizing realistic samples (because being backpropagated to update $G$), but such synthetic samples may be irrelevant to a specific label. In fact, the feature values are supposed to have similar numeric ranges; otherwise, $D$ will pay attention to only the features with large values and ignores the features with small values. As a consequence, $G$ cannot be updated well. We propose to use a simplified instance normalization (SIN) to ensure that fine-grained features can also be learned. More specifically, SIN is individually applied to each feature map. Then, the feature vector (concatenated normalized feature maps) goes through the aggregation. This helps in synthesizing faces with consistent gender. In other words, in general, without SIN, because of the imbalance of feature values, some feature values will be devoured by the other ones, resulting in the disappearance of certain important feature values that are related to the specific class label. 

\vspace{-0.2cm}\paragraph{(5) Low Global Sensitivity.} The fifth advantage is the low $\ell_2$-sensitivity of the SIN-and-aggregation operation. More specifically, as mentioned above, the feature maps need to be normalized and then concatenated as the feature vector before the aggregation. We find that the $\ell_2$-sensitivity of such an SIN-and-aggregation operation can be calculated as a relatively small and controllable value $\sqrt{m}p$, where $m$ is the number of feature maps and the size of the feature map is $p\times p$. The calculation is deferred to Section~\ref{sec: DPAF}. Properly setting $\sqrt{m}p$ effectively reduces the noise magnitude, thereby raising the utility.

\begin{figure}[t]
\includegraphics[width=8cm]{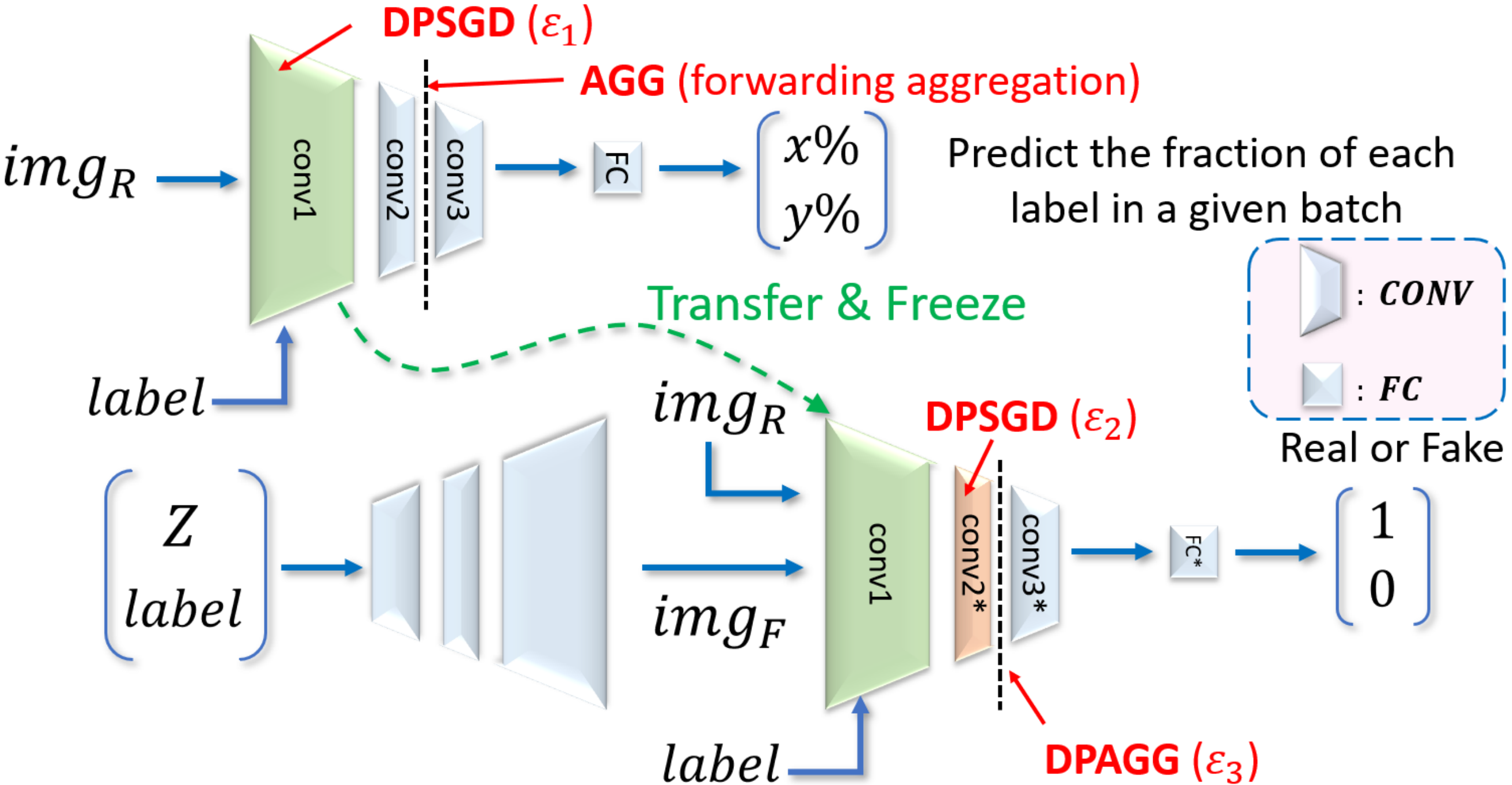}
\centering
\caption{The model architecture of DPAF.}
\label{fig: framework of DPAF}
\end{figure}

\begin{algorithm}[t!]
    \DontPrintSemicolon
	\caption{Training of DPAF}
	\label{algo: Training of DPAF}
	\textbf{Notation}: number of batches $\mathbb{B}$, mean square error loss function $\mathcal{L}_{\text{MSE}}$, binary cross-entropy loss functions $\mathcal{L}_{\text{BCE}}$, $\mathcal{L}'_{\text{BCE}}$, $\mathcal{L}''_{\text{BCE}}$, asymmetry multiplier $\mu$, number of critic iterations per generator iteration $n_{critic}$

    \tcc{the for-loop below trains the classifier $C$}
    \For{$i=1$ to $\mathbb{B}$}
    {
        compute $\mathcal{L}_\text{MSE}$ over
the $i$-th batch 

        SGD for updating conv2, conv3, and FC
        
        DPSGD($\varepsilon_1$) for updating conv1
    }
    conv1*$\leftarrow$ conv1 \tcp*{conv1* and conv1 share parameters}

    \For{$i=1$ to $\mathbb{B}$}
    {
        \tcc{computing $\mathcal{L}_\text{BCE}$ on $D$ with DPAGG($\varepsilon_3$),  $\mathcal{L}_\text{BCE}^{\prime}$ on $D$ that replaces DPAGG($\varepsilon_3$) by AGG, and $\mathcal{L}_\text{BCE}^{\prime\prime}$ on $D$ without DPAGG($\varepsilon_3$), respectively}
        compute $\mathcal{L}_\text{BCE}$ over the $i$-th batch \\
        \tcc{the code below asymmetrically trains $D$}
        \If{$i\%\mu= 0$}
       {
        compute $\mathcal{L}_\text{BCE}^{\prime}$ over the $\left[ i-\mu+1, i \right]$-th batches\\
        SGD for updating conv3* and FC* by $\mathcal{L}_\text{BCE}$\\
        DPSGD($\varepsilon_2$) for updating conv2* by $\mathcal{L}_\text{BCE}^{\prime}$
        }
        \Else
        {
         SGD for updating conv3* and FC* by $\mathcal{L}_\text{BCE}$
        }
        \tcc{the code below trains $G$}
        \If{$i\%n_{critic}= 0$}
        {
            compute $\mathcal{L}_\text{BCE}^{\prime\prime}$ over each sample from the $i$-th batch \\
            SGD for update of $G$
        }
    }
\end{algorithm}

\subsection{DPAF}\label{sec: DPAF}
Here, we present DPAF (\textbf{D}ifferentially \textbf{P}rivate \textbf{A}ggregation in \textbf{F}orward phase), an effective generative model for differentially private image synthesis. The architecture of DPAF is illustrated in Figure~\ref{fig: framework of DPAF}. The notation table summarizing the frequently used notations can be found in Appendix. DPAF is trained by using transfer learning. Hence, similar to transfer learning, DPAF has two phases, training a classifier first and training a GAN, both in the DP manner. We describe each of them below. 

Though using transfer learning as a subroutine to boost the utility, DPAF does not use warm start~\cite{zhang2018}, which is a technique for improving the utility by taking advantage of the extra data with similar data distribution, because it is not always able to find such data for arbitrary sensitive datasets. 

\subsubsection{Training a Classifier Before Transfer Learning}\label{sce: Training a Classifier Before Transfer Learning} The architecture of the classifier $C$ in DPAF before transfer learning is shown in the upper part of Figure~\ref{fig: framework of DPAF}. The classifier $C$ is identical to the discriminator of cDCGAN (in fact, a standard convolutional neural network, CNN), where there are three parts of convolutional layers (conv1, conv2, and conv3) and certain fully connected (FC) layers, except that a layer of aggregation is added between conv2 and conv3. The actual numbers of layers for the convolutional and FC layers and the number of neurons in the input layer are dependent on the input image size. We follow the common setting that the number of neurons in the next convolutional layer is half of the one in the current convolutional layer.

The ordinary CNN is designed to classify inputs. However, because we introduce an aggregation layer (AGG) between conv2 and conv3, which aggregates multiple normalized features in a batch, obviously the CNN can no longer output the predicted class for a given input image and label. Instead, $C$ here is designed to predict the percentage of each class in a given batch, as shown in Figure~\ref{fig: framework of DPAF}. To achieve the above goal, the CNN is trained with the labeled sensitive data (img$_R$ in Figure~\ref{fig: framework of DPAF}) through the mean square error (MSE) loss function, $\mathcal{L}_{\text{MSE}}$. Afterward, SGD is applied to conv2, conv3, and FC for the update of the corresponding parameters, while the DPSGD with the privacy budget $\varepsilon_1$, DPSGD($\varepsilon_1$), is applied to conv1, because only conv1 will be recycled to be used after transfer learning. Lines 2$\sim$5 in Algorithm~\ref{algo: Training of DPAF} correspond to the above training procedures. After training, conv2, conv3, and FC are discarded and will not be released. 

\subsubsection{Training a DPGAN After Transfer Learning}\label{sec: Training a DPGAN After Transfer Learning} In this phase, we aim to train a DPGAN such that a generator satisfying DP can be released. GAN is known to be composed of two parts, a generator $G$ and a discriminator $D$. In DPAF, the architecture of $D$ is identical to $C$, as shown in the bottom part of Figure~\ref{fig: framework of DPAF}. The conv1 in $C$ is transferred to be the conv1 in $D$; i.e., $C$ and $D$ share the same conv1 (Line 6 in Algorithm~\ref{algo: Training of DPAF}). However, conv2*, conv3*, and FC* are randomly initialized. Unlike $C$, where an AGG is placed between conv2 and conv3, a DP feature aggregation layer with the privacy budget $\varepsilon_3$, DPAGG($\varepsilon_3$), is placed between conv2* and conv3* of $D$. The architecture of $G$ is a reverse of $D$ without DPAGG($\varepsilon_3$). 

Training the DPGAN in DPAF is similar to training an ordinary GAN; i.e., we iteratively train $D$ first and then the generator until the convergence. $D$ takes as input the sensitive images (img$_R$ in Figure~\ref{fig: framework of DPAF}), synthetic images (img$_F$ in Figure~\ref{fig: framework of DPAF}), and the label. Given that conv1 is frozen during the training due to transfer learning, $D$ is trained to differentiate between real and synthetic images. The general guideline of training $D$ is that after binary cross entropy (BCE) loss function $\mathcal{L}_{\text{BCE}}$ is calculated on $D$ with DPAGG($\varepsilon_3$), conv3* and FC* can be updated via SGD because such an update of conv3* and FC* still satisfies the DP according to the post-processing of DPAGG($\varepsilon_3$). Consider $\tilde{D}$ as the discriminator $D$ that replaces DPAGG($\varepsilon_3$) by AGG. In addition, another BCE loss function $\mathcal{L}'_{\text{BCE}}$ is calculated on $\tilde{D}$. conv2* will be updated by using DPSGD($\varepsilon_2$) based on $\mathcal{L}'_{\text{BCE}}$. Lines 8$\sim$14 in Algorithm~\ref{algo: Training of DPAF} correspond to the training of $D$. The details about $\mathcal{L}_{\text{BCE}}$ and $\mathcal{L}'_{\text{BCE}}$ are related to our proposed asymmetric training and will be described later.

Consider $\hat{D}$ as the discriminator $D$ without DPAGG($\varepsilon_3$); i.e., $\hat{D}$ can be seen as a standard CNN. After updating $D$ $n_{\text{critic}}$ times, the BCE loss function $\mathcal{L}''_{\text{BCE}}$ is calculated on $\hat{D}$ and is backpropagated to update $G$ through SGD. $n_{\text{critic}}$ is called the number of critic iterations per generator iteration for the better training~\cite{wgan}. Skipping the aggregation when training $G$ can be attributed to the fact that we aim to learn how to generate a single image, instead of a mix of images. Note that $\mathcal{L}_{\text{BCS}}$, $\mathcal{L}'_{\text{BCS}}$, and $\mathcal{L}''_{\text{BCS}}$ all work on the same $D$, but depending on which part of $D$ needs to be updated, different components of $D$ are ignored. Lines 15$\sim$17 in Algorithm~\ref{algo: Training of DPAF} correspond to the training of $G$. 

\vspace{-0.2cm}\paragraph{The Design of DPAGG.} AGG can be implemented via two steps. First, the normalized feature maps are concatenated as a feature vector. Second, the feature vectors from different samples in a batch are aggregated. We can have DPAGG when we apply Gaussian mechanism to the aggregated feature vector derived from AGG. 

Inspired by \cite{instancenormalization, 1607.08022}, we propose to use a simplified instance normalization (SIN) to not only ensure the balance of feature values but also, more importantly, derive a bound of the global sensitivity of AGG. SIN can be formulated as follows. 
\begin{equation}
\begin{split}
\mu_{i_1i_2} = \frac{1}{HW}\sum_{i_3=1}^{H}\sum_{i_4=1}^{W}&x_{i_1i_2i_3i_4}, \sigma_{i_1i_2}^{2} = \frac{1}{HW}\sum_{i_3=1}^{H}\sum_{i_4=1}^{W}\left(x_{i_1i_2i_3i_4} - \mu_{i_1i_2}  \right) ^{2},\\
&\widehat{x_{i_1i_2i_3i_4}} = \frac{x_{i_1i_2i_3i_4} - \mu_{i_1i_2}}{\sqrt{\sigma_{i_1i_2}^{2}}}, \notag
\end{split}
\end{equation}
where $\mu_{i_1i_2}$ is the mean of feature map $X_{i_1i_2}$, $\sigma_{i_1i_2}^{2}$ is the variance of $X_{i_1i_2}$, $i_1$ is the index of the image in the batch, $i_2$ is the feature channel (color channel if the input is an RGB image), $H$ is the height of the feature map, $W$ is the width of the feature map, $x_{i_1i_2i_3i_4} \in \mathbb{R}$ is an element of feature map $X_{i_1i_2}$, $\widehat{x_{i_1i_2i_3i_4}}$ is the new value of 
$x_{i_1i_2i_3i_4}$ after SIN. One can easily see that SIN is different from ordinary instance normalization (IN) in that SIN does not have learnable parameters center and scale~\cite{instancenormalization, 1607.08022}. Concretely, each normalized feature map (through SIN) is guaranteed to have the same $\ell_2$-norm $p$, where $p\times p$ is the size of feature map. 


Before applying Gaussian mechanism to the APP output, we calculate the $\ell_2$-sensitivity $\Delta_{2, \text{AGG}}$ of the AGG below. 
\begin{small}
\begin{align}
\Delta_{2, \text{AGG}} &=\sqrt{ \sum_{x \in X_{11}} (\frac{x - \mu_{11}}{\sigma_{11}})^2 + \cdots +\sum_{x \in X_{1m}} (\frac{x - \mu_{1m}}{\sigma_{1m}})^2} \notag\\
&= \sqrt{ \frac{1}{\sigma_{11}^2} \sum_{x \in X_{11}}({x - \mu_{11}})^2 + \dots + \frac{1}{\sigma_{1m}^2} \sum_{x \in X_{1m}}({x - \mu_{1m}})^2} \notag\\
&=  \sqrt{ \sum_{j=1}^{m}\left[  \frac{1}{\frac{\sum_{x \in X_{1j}}({x - \mu_{1j}})^2}{\left \| X_{1j} \right \|}}\sum_{x \in X_{1j}}({x - \mu_{1j}})^2 \right] } \notag\\
&= \sqrt{ \left \| X_{11} \right \| + \dots + \left \| X_{1m} \right \| } 
= \sqrt{ p^2 + \dots + p^2} 
= \sqrt{m}p,\label{eq: AGG sensitivity}
\end{align}
\end{small}
where $m$ is the number of feature maps, $x$ is an element of feature map $X_{1j}$ for $j=1, 2, \cdots, m$. In the above calculation of $\Delta_{2, \text{AGG}}$, we consider the case where the batch size is $1$ because we aim to know the amount of difference to which a single sample in the batch contributes.

Depending on the tasks, the normalization can be placed in a different position or even multiple layers~\cite{differentIN1, differentIN2} for better training. We find that in addition to offering a fine-grained control of features similar to computer vision tasks, SIN in DPAF plays a unique role in bounding $\ell_2$-sensitivity, though SIN is an easy modification of IN. Note that we apply SIN to only those feature maps just before DPAGG. Such a design is supported by our experiments that applying SIN in all the layers before DPAGG, in turn, degrades the utility because, unlike IN, SIN lacks the learnable parameters. 

\vspace{-0.2cm}\paragraph{Asymmetric Training of $D$ in DPAF.} 
In fact, AGG asks for a smaller batch size because, otherwise, the features will be mixed and cannot be recognized. Nevertheless, a smaller batch size, in turn, is harmful to DPSGD because the DP noise will make a greater impact on the gradient. Hence, DPAF prefers a larger batch size from the DPSGD point of view. We propose an asymmetric training strategy to resolve the contradicting requirements of setting a proper batch size. In essence, in the asymmetrical training, when training $D$, we update conv3* and FC* through SGD for every iteration but update conv2* through DPSGD($\varepsilon_2$) for every $\mu$ iterations, as shown in Figure~\ref{fig: The asymmetric training of DPAF}. $\mu$ is called \textit{asymmetry multiplier} because it determines the ratio of the privacy budget for conv2* and the budget for both conv3* and FC*. 

\begin{figure}[t]
\includegraphics[width=9cm]{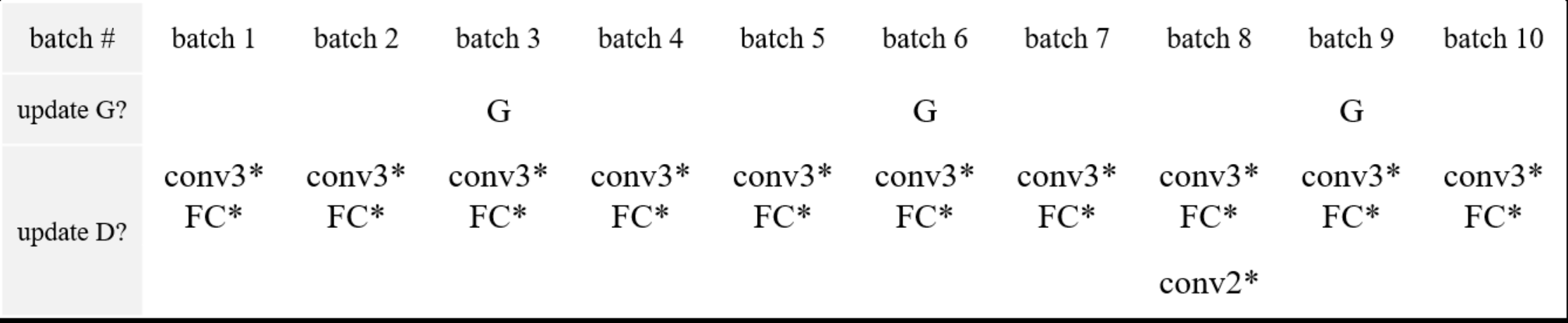}
\centering
\caption{The asymmetric training with $\mu=8$ and $n_{\text{critic}}=3$.}
\label{fig: The asymmetric training of DPAF}
\end{figure}

More specifically, given the $i$-th batch, $\mathcal{L}_{\text{BCE}}$ is calculated by feeding the samples from the $i$-th batch to $D$ and then we update conv3* and FC* through SGD. For the $i$-th batch with $\mu\mid i$, $\mathcal{L}'_{\text{BCE}}$ is calculated by feeding all of the samples from the $\mu$ latest batches to $\tilde{D}$ and then we update conv2* through DPSGD($\varepsilon_2$). Here, we make an important observation that updating conv2* through DPSGD($\varepsilon_2$) for every $\mu$ iterations virtually increases batch size $\mu$ times for conv2*. 

Given that $\mu$ controls the batch size for updating conv2*, a natural question that arises is whether $\mu$ can be increased arbitrarily. Unfortunately, we cannot arbitrarily increase $\mu$ because the increased $\mu$ also leads to a less frequent update of conv2*, which may in turn degrades the utility. 

\subsection{Discussion}\label{sec: Discussion}
Here, we have more discussions about the rationale behind the design of DPAF. 


\vspace{-0.2cm}\paragraph{Why Not Eliminate conv2*.} Consider the case where all of the layers before the DP feature aggregation belong to conv1*. The number of learnable parameters in $D$ will be much smaller (i.e., only conv3* and FC*); i.e., no conv2* exists. Such a setting hurts the training of GANs. This can be attributed to the fact that one knows from the GAN literature that if $G$ ($D$) is much stronger than $D$ ($G$), the training of GANs will likely fail to converge. In addition, conv3* and FC* may have fewer parameters compared to conv2*. It is difficult to well train $D$ under this circumstance. Thus, keeping certain layers as conv2* is beneficial for adversarial learning. 

\vspace{-0.2cm}\paragraph{Why Not More Layers for conv2*.} As more learnable parameters in $D$ may help the training of GANs, why conv2* does not have more layers? This can be explained as follows. 
If conv2* has more layers (parameters), because conv2* is updated through DPSGD, gradient clipping will lead to more information loss, flattening the feature values. In addition, if conv2* has more layers (parameters), because conv1 and conv2* both are trained by DPSGD, the output of (conv1, conv2*) will be too noisy, hindering the utility.

\vspace{-0.2cm}\paragraph{Why Not More Layers for conv3*.} A question that may arise is why conv3* does not have more layers. As the total number of layers is fixed given an input image, if conv3* has more layers, then either conv1 or conv2* (or both) will be shrunk. Thus, DPAGG is closer to low-level features. $D$ cannot have meaningful learning from the aggregation of level-level features. 

\vspace{-0.2cm}\paragraph{Choice of CGAN.} The DPAF is designed to support conditional generation. Thus, one needs to consider a CGAN in DPAF. Compared to GANs, $G$ and $D$ of CGANs need to consider the class label to ensure both the indistinguishability between the real and synthetic samples and the consistency between the input label and the label of synthetic samples. In general, there are two straightforward solutions for label injection. First, the class label is added as part of the input vector in such a case. If we feed labels to the input layer, the labels will be diluted in the forward phase and have a weak signal only. Second, $D$ is designed with two loss functions; one for the ordinary GAN loss and another for the class label. A representative of such a design is AC-GAN~\cite{acgan}, which outputs labels as part of the loss function. Nevertheless, due to access to the label, such a design leads to a privacy budget splitting and therefore suffers from utility degradation. 

In our design, DPAF follows the architecture of cDCGAN~\cite{cdcgan}. Inspired by~\cite{invertibleGAN} stating that the class label is better added to the first layer, we decide to use cDCGAN though there are no considerations of aggregation and DP in \cite{invertibleGAN}. In essence, cDCGAN feeds labels to the output of the input layer one by first computing the embedding (from scalar to vector) of labels, significantly strengthening the signal. A natural question that arises is why the class label is not added to the latter layers of $D$, given the class label in the latter layers may preserve an even stronger signal. The drawback of doing so is that all the layers before the layer to which the class label is added can hardly learn anything, because the classifier can know the portion by looking at the label only. 

\vspace{-0.2cm}\paragraph{The Position of DPAGG.}
DPAF heavily relies on DPAGG to raise the utility of synthetic samples. Thus, A natural question that arises is where the best position for DPAGG is. Without loss of generality, the DPAGG is placed to minimize the $\ell_2$-sensitivity $\Delta_{2, \text{AGG}}$ in Eq. (\ref{eq: AGG sensitivity}). As $\Delta_{2, \text{AGG}}=\sqrt{m}p$, $\Delta_{2, \text{AGG}}$ is dependent on the position of DPAGG. Given the design of a conventional CNN, where the number of neurons in the next convolutional layer is half of the one in the current convolutional layer, if the input is a $\rho \times \rho$ $c$-channel image, $\Delta_{2, \text{AGG}}$ can be calculated as $\sqrt{c\prod_{i=1}^a k_i}\cdot \frac{\rho}{2^a}$, where $k_i$ is the number of filters in the $i$-th convolutional layer and DPAGG is placed behind the $a$-th convolutional layer. Define $R_j$ as $\left(\sqrt{c\prod_{i=1}^j k_i}\cdot \frac{\rho}{2^j}\right) \big/ \left(\sqrt{c\prod_{i=1}^{j-1} k_i}\cdot \frac{\rho}{2^{j-1}}\right)$. We can easily derive $R_j=\sqrt{k_j}/2$. Thus, if $k_j\geq 4$, then $\Delta_{2, \text{AGG}}$ is monotone increasing from earlier to latter layers. From the $\ell_2$-sensitivity point of view, we conclude that the best position for DPAGG is between the first and second convolutional layers. Unfortunately, placing DPAGG in such a position does not lead to a decent utility in practice, because it completely destroys the structure of DPAF (e.g., the disappearance of conv1 and conv2*). Hence, we will empirically examine the other configurations in Section~\ref{sec: Experiment Evaluation}. 

\subsection{Privacy Analysis}
We rely on R\'{e}nyi DP (RDP)~\cite{RDP} for our privacy analysis. Compared to the ordinary DP in Definition~\ref{def: DP}, RDP is a variant of DP with a tighter bound of privacy loss.

\begin{definition} 
\label{def: RDP}
A randomized algorithm $\mathcal{M}$ is $(\alpha, \epsilon (\alpha))$-RDP with $\alpha>1$ if for any neighboring datasets $\mathcal{D}$ and $\mathcal{D}'$,
\begin{scriptsize}
\begin{align}
\label{eq: RDP}
D_\alpha(\mathcal{M}(\mathcal{D})||\mathcal{M}(\mathcal{D}'))=\frac{1}{\alpha-1}\log \mathbb{E}_{x \thicksim\mathcal{M}(\mathcal{D}')}\left[ \left( \frac{\text{Pr}[\mathcal(\mathcal{D})=x]}{\text{Pr}[\mathcal(\mathcal{D}')=x]} \right)^{\alpha-1}\right]\leq \epsilon (\alpha),
\end{align}
\end{scriptsize}
where $D_{\alpha}$ is the R\'{e}nyi divergence of order $\alpha$.
\end{definition}

Before proving our main result, we describe some necessary properties of RDP in Theorems~\ref{thm: gaussian mechanism in RDP}$\sim$\ref{thm: convert RDP to DP}. 



\begin{theorem}[Gaussian Mechanism on RDP~\cite{RDP, datalens}]
If a function $f$ has $\ell_{2}$-sensitivity $u$, then $G_{\sigma}\circ f$ obeys ($\alpha, \varepsilon(\alpha)$)-RDP, where $\varepsilon(\alpha)=\alpha u^{2} / (2\sigma^{2})$ and $G_{\sigma}$ is the Gaussian mechanism defined in Section~\ref{sec: Preliminary}.
\label{thm: gaussian mechanism in RDP}
\end{theorem}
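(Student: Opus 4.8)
The plan is to compute the R\'enyi divergence $D_\alpha\big(G_\sigma\circ f(\mathcal{D})\,\|\,G_\sigma\circ f(\mathcal{D}')\big)$ in closed form for a fixed pair of neighboring datasets and then maximize over all such pairs. By the definition of the Gaussian mechanism in Section~\ref{sec: Preliminary}, the two output distributions are $P=N(f(\mathcal{D}),\sigma^2 I)$ and $Q=N(f(\mathcal{D}'),\sigma^2 I)$: two isotropic Gaussians with the \emph{same} covariance $\sigma^2 I$ differing only by the translation $v\triangleq f(\mathcal{D})-f(\mathcal{D}')$, where $\|v\|_2\le u$ by the sensitivity hypothesis. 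Since R\'enyi divergence is invariant under applying a common invertible map to both of its arguments, I would first translate so that $Q$ is centered at the origin and then apply an orthogonal rotation aligning $v$ with the first coordinate axis; because the covariance is isotropic this changes neither measure's shape, and because $D_\alpha$ is additive over the independent coordinates of a product distribution, all coordinates except the first contribute $0$ (their marginals coincide), reducing the problem to one dimension with mean gap $\|v\|_2$.

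In that one-dimensional problem I would evaluate $D_\alpha(P\|Q)=\frac{1}{\alpha-1}\log\int_{\real}P(x)^\alpha Q(x)^{1-\alpha}\,dx$ with $P,Q$ the densities of $N(\|v\|_2,\sigma^2)$ and $N(0,\sigma^2)$. The normalizing constants combine to the single factor $\tfrac{1}{\sqrt{2\pi}\sigma}$, and the exponent $\alpha(x-\|v\|_2)^2+(1-\alpha)x^2$ completes the square to $(x-\alpha\|v\|_2)^2+\alpha(1-\alpha)\|v\|_2^2$, so the Gaussian integral collapses and leaves $\exp\!\big(-\alpha(1-\alpha)\|v\|_2^2/(2\sigma^2)\big)$. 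Taking $\frac{1}{\alpha-1}\log(\cdot)$ and using $-(1-\alpha)/(\alpha-1)=1$ then gives $D_\alpha(P\|Q)=\alpha\|v\|_2^2/(2\sigma^2)$. This is monotone nondecreasing in $\|v\|_2$, so the supremum over neighboring datasets replaces $\|v\|_2$ by its bound $u$, yielding $\varepsilon(\alpha)=\alpha u^2/(2\sigma^2)$, which is exactly the claimed RDP parameter in Definition~\ref{def: RDP}.

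The step I expect to need the most care is the multivariate-to-univariate reduction: one must state cleanly that (i) the isotropic covariance permits a free rotation without altering either distribution, (ii) $D_\alpha$ is additive across the coordinates of a product measure, and (iii) coordinates with equal means contribute nothing. Each of these is standard, but together they are what upgrades the scalar completion-of-the-square computation to arbitrary dimension. Everything else is a routine Gaussian integral; note in particular that only $\alpha>1$, $\sigma>0$, and $\|f(\mathcal{D})-f(\mathcal{D}')\|_2\le u$ are used, so — unlike the $(\varepsilon,\delta)$-DP statement of the Gaussian mechanism recalled earlier — no restriction of the form $\varepsilon<1$ is needed here.
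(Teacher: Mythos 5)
Your derivation is correct: the reduction to one dimension (translation plus rotation, using isotropy of the covariance, additivity of $D_\alpha$ over independent coordinates, and the vanishing of the contribution from coordinates with equal means) followed by completing the square in the exponent yields $D_\alpha(P\|Q)=\alpha\|v\|_2^2/(2\sigma^2)\le \alpha u^2/(2\sigma^2)$ exactly as claimed. The paper itself gives no proof of this theorem --- it is imported verbatim from the cited RDP reference --- and your computation is the standard argument found there, so there is no divergence of approach to discuss; your closing observation that only $\alpha>1$ and $\sigma>0$ are needed (with no $\varepsilon<1$ restriction, unlike the $(\varepsilon,\delta)$-DP form of the Gaussian mechanism recalled in Section~\ref{sec: Preliminary}) is also accurate.
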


\begin{theorem}[Sequential Composition on RDP~\cite{RDP}]
If the mechanism $\mathcal{M}_1$ satisfies $(\alpha, \epsilon_{1})$-RDP and the mechanism $\mathcal{M}_2$ satisfies $(\alpha, \epsilon_{2})$-RDP, then $\mathcal{M}_2\circ \mathcal{M}_1$ satisfies $(\alpha, \epsilon_{1} + \epsilon_{2})$-RDP.
\label{thm: gaussian composition in RDP}
\end{theorem}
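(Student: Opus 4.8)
The plan is to view $\mathcal{M}_2\circ\mathcal{M}_1$ as the adaptive mechanism that, on input $\mathcal{D}$, first draws $y_1\sim\mathcal{M}_1(\mathcal{D})$ and then draws $y_2\sim\mathcal{M}_2(y_1,\mathcal{D})$, releasing the pair $(y_1,y_2)$; ordinary composition is the special case in which $\mathcal{M}_2$ ignores $y_1$, so proving the adaptive statement suffices. Fix an order $\alpha>1$ and neighboring datasets $\mathcal{D},\mathcal{D}'$. Write $p_1,p_1'$ for the densities of $\mathcal{M}_1(\mathcal{D}),\mathcal{M}_1(\mathcal{D}')$ and $p_2(\cdot\mid y_1),p_2'(\cdot\mid y_1)$ for those of $\mathcal{M}_2(y_1,\mathcal{D}),\mathcal{M}_2(y_1,\mathcal{D}')$, all with respect to a common dominating measure, so that the joint densities factor by the chain rule as $p(y_1,y_2)=p_1(y_1)\,p_2(y_2\mid y_1)$ and $p'(y_1,y_2)=p_1'(y_1)\,p_2'(y_2\mid y_1)$. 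The object to control is the exponentiated Rényi divergence $\exp\!\big((\alpha-1)D_\alpha(\mathcal{M}_2\circ\mathcal{M}_1(\mathcal{D})\,\|\,\mathcal{M}_2\circ\mathcal{M}_1(\mathcal{D}'))\big)=\int\!\int p(y_1,y_2)^{\alpha}p'(y_1,y_2)^{1-\alpha}\,dy_2\,dy_1$.

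Substituting the factorization and applying Tonelli's theorem (the integrand is nonnegative) to perform the $y_2$-integral first, this becomes
\begin{align}
\int p_1(y_1)^{\alpha}p_1'(y_1)^{1-\alpha}\left(\int p_2(y_2\mid y_1)^{\alpha}p_2'(y_2\mid y_1)^{1-\alpha}\,dy_2\right)dy_1. \notag
\end{align}
For each fixed $y_1$ the inner integral is exactly $\exp\!\big((\alpha-1)D_\alpha(\mathcal{M}_2(y_1,\mathcal{D})\,\|\,\mathcal{M}_2(y_1,\mathcal{D}'))\big)$, which is at most $e^{(\alpha-1)\epsilon_2}$ because $\mathcal{D},\mathcal{D}'$ are neighbors and $\mathcal{M}_2(y_1,\cdot)$ is $(\alpha,\epsilon_2)$-RDP, and crucially this bound does not depend on $y_1$. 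Pulling it out of the outer integral leaves $e^{(\alpha-1)\epsilon_2}\int p_1(y_1)^{\alpha}p_1'(y_1)^{1-\alpha}\,dy_1=e^{(\alpha-1)\epsilon_2}\exp\!\big((\alpha-1)D_\alpha(\mathcal{M}_1(\mathcal{D})\,\|\,\mathcal{M}_1(\mathcal{D}'))\big)\le e^{(\alpha-1)(\epsilon_1+\epsilon_2)}$, using that $\mathcal{M}_1$ is $(\alpha,\epsilon_1)$-RDP. Taking logarithms and dividing by $\alpha-1>0$ yields $D_\alpha(\mathcal{M}_2\circ\mathcal{M}_1(\mathcal{D})\,\|\,\mathcal{M}_2\circ\mathcal{M}_1(\mathcal{D}'))\le\epsilon_1+\epsilon_2$; since the neighboring pair was arbitrary, the composed mechanism is $(\alpha,\epsilon_1+\epsilon_2)$-RDP.

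The step I expect to be the crux is handling the adaptive dependence correctly: factoring $e^{(\alpha-1)\epsilon_2}$ through the outer integral is legitimate only because $\mathcal{M}_2(y_1,\cdot)$ is assumed $(\alpha,\epsilon_2)$-RDP for \emph{every} admissible value $y_1$ of the first output, so the inner bound is uniform in $y_1$ and can be separated from the RDP guarantee of $\mathcal{M}_1$. The remaining care is measure-theoretic: one assumes the outputs admit densities with respect to a common base measure so that the chain-rule factorization of the joint density and the interchange of the order of integration are valid (immediate in the discrete case, and otherwise phrased via Radon--Nikodym derivatives). Everything else — rewriting $D_\alpha$ in integral form, invoking Tonelli, and monotonicity of $\log$ — is routine.
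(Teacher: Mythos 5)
Your proof is correct. The paper does not prove this theorem at all --- it is quoted as a known property of RDP and attributed to the cited reference --- so there is no in-paper argument to compare against; your chain-rule factorization, Tonelli interchange, and uniform-in-$y_1$ bound on the inner integral reproduce the standard proof of (adaptive) RDP composition from that reference, and you correctly identify the one point that actually matters, namely that the $e^{(\alpha-1)\epsilon_2}$ bound holds for every realization of the first output and can therefore be pulled outside the outer integral.
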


\begin{theorem}[Privacy Amplification by Subsampling~\cite{Wang2019SubsampledRD}] Let $\mathcal{M} \circ$ {\rm\textsf{subsample}} be a randomized mechanism that first performs the subsampling without replacement with the subsampling rate $\gamma$ on the dataset $X$ and then takes as an input from the subsampled dataset $X^\gamma$. For all integers $\alpha \geq 2$, if $\mathcal{M}$ obeys $(\alpha, \epsilon (\alpha))$-RDP through Gaussian mechanism, then $\mathcal{M} \circ {\rm subsample}$ satisfies $(\alpha, \epsilon^{\prime}(\alpha))$-RDP, where \begin{footnotesize} \begin{equation} \begin{split} \epsilon^{\prime}(\alpha) \leq \frac{1}{(\alpha - 1)}\log (1 + \gamma^{2} \binom{\alpha}{2}\min \{ 4(e^{\epsilon (2)} - 1), e^{\epsilon (2)}\min \{2,\\ (e^{\epsilon (\infty)} - 1)^{2} \} \} + \sum_{j=3}^{\alpha} \gamma ^{j}\binom{\alpha}{j}e^{(j-1)\epsilon (j)}\min \{2, (e^{\epsilon (\infty)} - 1)^{j} \} ), \end{split} \end{equation} \end{footnotesize} and $\varepsilon(\alpha)=\alpha u^{2} / (2\sigma^{2})$ with $u$ as the sensitivity.\label{thm: subsampling amplification} \end{theorem}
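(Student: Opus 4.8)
The plan is to use the standard two-stage argument for subsampled RDP: first I would reduce the R\'{e}nyi divergence between the outputs on neighboring datasets to a one-dimensional statement about a two-component mixture, and then expand the resulting $\alpha$-th moment by the binomial theorem, bounding each term using the unsubsampled RDP curve $\epsilon(\cdot)$ together with the worst-case ($(\epsilon,0)$-type) bound $\epsilon(\infty)$. The Gaussian specialization $\varepsilon(\alpha)=\alpha u^{2}/(2\sigma^{2})$ is then just a final substitution via Theorem~\ref{thm: gaussian mechanism in RDP}.

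\emph{Reduction to a mixture.} First I would fix neighboring $X,X'$; by the add/remove convention write $X = X' \cup \{x^{*}\}$ (the opposite direction is symmetric up to the stated constants). Since the subsample is a uniformly random subset, $x^{*}$ is included with probability exactly $\gamma$; conditioned on inclusion the subsample is a random subset of $X$ containing $x^{*}$, and conditioned on exclusion it is just a random subset of $X'$. Letting $\mu_1$ and $\mu_0$ denote the law of $\mathcal{M}$ in these two cases, the output on $X$ is the mixture $M = (1-\gamma)\mu_0 + \gamma\mu_1$, while the output on $X'$ is $M' = \mu_0$. Hence, writing $t := \mu_1/\mu_0$,
\[
D_\alpha(M\,\|\,M') = \frac{1}{\alpha-1}\log \mathbb{E}_{z\sim\mu_0}\left[\left(1 + \gamma\bigl(t(z)-1\bigr)\right)^{\alpha}\right].
\]

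\emph{Binomial expansion.} Because $\alpha$ is an integer, the binomial theorem gives
\[
\mathbb{E}_{\mu_0}\left[\bigl(1+\gamma(t-1)\bigr)^{\alpha}\right] = \sum_{j=0}^{\alpha}\binom{\alpha}{j}\gamma^{j}\,\mathbb{E}_{\mu_0}\bigl[(t-1)^{j}\bigr].
\]
The $j=0$ term equals $1$, and the $j=1$ term vanishes because $\mathbb{E}_{\mu_0}[t] = \int\mu_1 = 1$; this cancellation is exactly what makes the leading nontrivial dependence on $\gamma$ quadratic. It remains to bound $\mathbb{E}_{\mu_0}[(t-1)^{j}]$ for $j\ge 2$. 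For $j=2$ I would use $\mathbb{E}_{\mu_0}[(t-1)^{2}] = \chi^{2}(\mu_1\|\mu_0) = e^{\epsilon(2)}-1$ via $D_2 = \log(1+\chi^{2})$, together with the coarser bound coming from $\lVert t\rVert_\infty \le e^{\epsilon(\infty)}$, and take the minimum (symmetrizing over the two directions of the neighboring relation, which is where the factor $4$ enters), yielding the stated $\binom{\alpha}{2}$ term. For $j\ge 3$ I would bound $(t-1)^{j}$ in two complementary ways: peeling off two factors gives $(t-1)^{j}\le (e^{\epsilon(\infty)}-1)^{j-2}(t-1)^{2}$, while expanding and keeping the dominant moment gives control by $\mathbb{E}_{\mu_0}[t^{j}] = e^{(j-1)\epsilon(j)}$ (the defining inequality of $\epsilon(j)$-RDP of $\mathcal{M}$ applied to $\mu_1$ versus $\mu_0$); combining them produces the summand $\gamma^{j}\binom{\alpha}{j}e^{(j-1)\epsilon(j)}\min\{2,(e^{\epsilon(\infty)}-1)^{j}\}$. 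Collecting all terms, taking $\log$ and dividing by $\alpha-1$ gives $\epsilon'(\alpha)$; substituting the Gaussian curve from Theorem~\ref{thm: gaussian mechanism in RDP} finishes the proof.

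\emph{Main obstacle.} The delicate step is the $j\ge 3$ bound: obtaining the product form $e^{(j-1)\epsilon(j)}\cdot\min\{2,(e^{\epsilon(\infty)}-1)^{j}\}$ rather than a crude power of $e^{\epsilon(\infty)}$ requires exploiting the $j$-th R\'{e}nyi moment and the $\ell_\infty$ bound on the likelihood ratio simultaneously, and carefully tracking signs, since $(t-1)^{j}$ is not sign-definite for odd $j$, so the termwise bound must be argued at the level of the expectation and not termwise in $z$. A secondary subtlety is making the mixture reduction fully rigorous under the unbounded (add/remove) neighboring relation in both directions when the subsample size is held fixed, which is also the source of the slack constant in the $j=2$ term.
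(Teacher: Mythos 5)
This theorem is imported verbatim from Wang et al.~\cite{Wang2019SubsampledRD}: the paper states it as a black-box tool and supplies no proof of its own, so there is no in-paper argument to compare yours against. Measured against the proof in the cited source, your sketch follows the same route that source actually takes — the mixture reduction $M=(1-\gamma)\mu_0+\gamma\mu_1$ against $M'=\mu_0$, the integer binomial expansion with the $j=1$ term cancelling, the identification $\mathbb{E}_{\mu_0}[(t-1)^2]=\chi^2(\mu_1\Vert\mu_0)=e^{\epsilon(2)}-1$ at $j=2$, and the combined moment/$\ell_\infty$ bounds for $j\ge 3$ are exactly the ingredients of their Theorem~9. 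Two caveats on your write-up: first, $\mathbb{E}_{\mu_0}[t^{j}]\le e^{(j-1)\epsilon(j)}$ is an inequality (the definition of $(j,\epsilon(j))$-RDP), not an equality; second, the points you flag as ``delicate'' are precisely where the cited proof does nontrivial work — both orderings of the add/remove relation must be controlled (the reverse divergence $D_\alpha(\mu_0\Vert M)$ does not admit the same clean expansion, which is where the factor $4$ in the $j=2$ term originates), and the simultaneous exploitation of $\epsilon(j)$ and $\epsilon(\infty)$ for odd $j$, where $(t-1)^j$ is not sign-definite, is handled in Wang et al.\ via their ternary $|\chi|^{j}$-divergence lemmas rather than by pointwise bounds. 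So your proposal is a faithful skeleton of the real proof with the hard lemmas left as acknowledged gaps, which is an entirely reasonable level of detail for a result the paper itself only cites.
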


In the following, we use $\epsilon^{\prime}(\alpha, \gamma, u)$ to indicate $\epsilon^{\prime}(\alpha)$ with the subsampling rate $\gamma$ and sensitivity $u$. 

\begin{theorem}[From RDP to DP~\cite{RDP}]
If a mechanism $\mathcal{M}$ is $(\alpha, \epsilon (\alpha))$-RDP, $\mathcal{M}$ is $(\epsilon (\alpha)+\frac{\log 1/\delta}{\alpha-1}, \delta)$-DP for any $\delta\in (0, 1)$.
\label{thm: convert RDP to DP}
\end{theorem}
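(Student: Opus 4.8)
The plan is to follow the standard RDP$\to$DP conversion (Mironov~\cite{RDP}): a threshold argument on the privacy-loss likelihood ratio combined with Markov's inequality. Fix arbitrary neighboring datasets $\mathcal{D},\mathcal{D}'$ and write $P=\mathcal{M}(\mathcal{D})$, $Q=\mathcal{M}(\mathcal{D}')$, with $p,q$ the corresponding densities (or pmfs). Fix any $\mathcal{S}\subseteq\text{Range}(\mathcal{M})$ and a threshold $c>0$ to be pinned down at the end. I would decompose $\mathcal{S}=(\mathcal{S}\setminus B)\cup(\mathcal{S}\cap B)$, where $B=\{x:\,p(x)/q(x)>e^{c}\}$ is the ``heavy'' region on which a single sample shifts the output distribution a lot.

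On $\mathcal{S}\setminus B$ the ratio is at most $e^{c}$ by definition, so $\Pr[P\in\mathcal{S}\setminus B]\le e^{c}\Pr[Q\in\mathcal{S}\setminus B]\le e^{c}\Pr[Q\in\mathcal{S}]$. For the remaining mass I would crudely bound $\Pr[P\in\mathcal{S}\cap B]\le\Pr[P\in B]$ and estimate this tail using the RDP hypothesis. Reading Definition~\ref{def: RDP} in the equivalent form $\mathbb{E}_{x\sim Q}\big[(p(x)/q(x))^{\alpha}\big]\le e^{(\alpha-1)\epsilon(\alpha)}$ and rewriting it by change of measure as $\mathbb{E}_{x\sim P}\big[(p(x)/q(x))^{\alpha-1}\big]\le e^{(\alpha-1)\epsilon(\alpha)}$, I can apply Markov's inequality to the nonnegative random variable $(p/q)^{\alpha-1}$ under $P$: $\Pr[P\in B]=\Pr_{x\sim P}\big[(p/q)^{\alpha-1}>e^{(\alpha-1)c}\big]\le e^{-(\alpha-1)c}\,\mathbb{E}_{x\sim P}\big[(p/q)^{\alpha-1}\big]\le e^{(\alpha-1)(\epsilon(\alpha)-c)}$, where $\alpha>1$ makes the exponent $\alpha-1$ positive so the manipulations are valid.

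Finally I would choose $c$ so that the tail term is exactly $\delta$: solving $e^{(\alpha-1)(\epsilon(\alpha)-c)}=\delta$ yields $c=\epsilon(\alpha)+\tfrac{\log(1/\delta)}{\alpha-1}$, which is well defined for every $\delta\in(0,1)$ and $\alpha>1$. Substituting back gives $\Pr[P\in\mathcal{S}]\le e^{c}\Pr[Q\in\mathcal{S}]+\delta$ for this $c$; since $\mathcal{S}$ and the neighboring pair $(\mathcal{D},\mathcal{D}')$ were arbitrary, this is precisely the inequality of Definition~\ref{def: DP} with privacy budget $\epsilon(\alpha)+\tfrac{\log 1/\delta}{\alpha-1}$ and failure probability $\delta$.

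I do not expect a genuinely hard step, since this is a short, standard lemma; the work is in bookkeeping. The one place that needs care is keeping the exponents consistent: the quantity actually controlled by Definition~\ref{def: RDP} is the $(\alpha-1)$-st moment of the likelihood ratio under $\mathcal{M}(\mathcal{D})$ (equivalently the $\alpha$-th moment under $\mathcal{M}(\mathcal{D}')$), so Markov must be applied to $(p/q)^{\alpha-1}$ rather than $(p/q)^{\alpha}$ for the bookkeeping to reproduce exactly the $\tfrac{\log(1/\delta)}{\alpha-1}$ term in the statement. The measure-theoretic fine print---absolute continuity of $P$ with respect to $Q$ on the relevant region, and whether $\text{Range}(\mathcal{M})$ is discrete or continuous---is routine and does not affect the argument.
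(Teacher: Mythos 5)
The paper does not actually prove Theorem~\ref{thm: convert RDP to DP}: it is imported verbatim from Mironov's RDP paper as background for the privacy analysis, so there is nothing in the text to compare against line by line. Your argument is correct and complete modulo the routine measure-theoretic fine print you flag. It is the standard ``tail bound'' proof: split $\mathcal{S}$ on the event $B=\{p/q>e^{c}\}$, bound the light part by $e^{c}\Pr[Q\in\mathcal{S}]$, bound $\Pr[P\in B]$ by Markov applied to $(p/q)^{\alpha-1}$ under $P$ (equivalently the $\alpha$-th moment under $Q$, which is what RDP controls), and tune $c$ so the tail equals $\delta$; the exponent bookkeeping checks out and yields exactly $c=\epsilon(\alpha)+\frac{\log(1/\delta)}{\alpha-1}$. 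This is a genuinely different (and arguably more transparent) route than Mironov's original proof of the same proposition, which uses a probability-preservation inequality obtained from H\"older's inequality together with a case split on whether $\Pr[Q\in\mathcal{S}]$ is already below a threshold; the two proofs give the same constant, and yours is the one that generalizes directly to the moments-accountant tail bounds of Abadi et al. One point worth making explicit: you silently correct a typo in the paper's Definition~\ref{def: RDP}, which places the exponent $\alpha-1$ on the likelihood ratio while taking the expectation under $\mathcal{M}(\mathcal{D}')$; taken literally that is not the order-$\alpha$ R\'enyi divergence, and your proof (rightly) uses the standard form $\mathbb{E}_{x\sim Q}\bigl[(p/q)^{\alpha}\bigr]\le e^{(\alpha-1)\epsilon(\alpha)}$, without which the $\frac{\log(1/\delta)}{\alpha-1}$ term would not come out correctly.
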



\begin{theorem}
The proposed DPAF framework guarantees $(T_1\epsilon^{\prime} (\alpha, \gamma_1, u_1 ) + T_2\epsilon^{\prime} (\alpha, \gamma_2, u_2 ) + T_3\epsilon^{\prime} (\alpha, \gamma_3, \sqrt{m}p ) + \frac{\log \frac{1}{\delta}}{\alpha - 1}, \delta)$-DP for all $\alpha \geq 2$ and $\delta \in (0, 1)$.
\end{theorem}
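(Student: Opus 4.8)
I would prove the bound by privacy accounting in R\'enyi DP followed by a single conversion to $(\varepsilon,\delta)$-DP. First I would isolate every step of Algorithm~\ref{algo: Training of DPAF} that touches the sensitive dataset $\mathcal{D}$. There are exactly three such ``channels'': (i) the DPSGD($\varepsilon_1$) update of conv1 in the classifier phase, run $T_1$ times, each on a subsample of rate $\gamma_1$ with the per-record gradient clipped to $\ell_2$-norm $u_1$ and Gaussian noise of scale $\sigma_1$; (ii) the DPSGD($\varepsilon_2$) update of conv2* in the GAN phase, run $T_2$ times (once per $\mu$ batches) on a subsample of rate $\gamma_2$ with clipping threshold $u_2$ and noise $\sigma_2$; and (iii) the DPAGG($\varepsilon_3$) operation inside $\mathcal{L}_{\text{BCE}}$, run $T_3$ times, each adding Gaussian noise of scale $\sigma_3$ to the aggregated SIN feature vector of a subsample of rate $\gamma_3$. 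Everything else in Algorithm~\ref{algo: Training of DPAF} is data-independent given the outputs of (i)--(iii): the plain-SGD updates of conv2/conv3/FC in the classifier and of conv3*/FC* and $G$ in the GAN, and the parameter copy of conv1 into conv1*. In particular, the released generator $G$ (and, conservatively, every parameter the adversary might see) is obtained from the outputs of (i)--(iii) by post-processing using only data-independent randomness, so by the post-processing property of DP it suffices to bound the privacy of the tuple of those three outputs.

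For the per-invocation RDP of each channel I would invoke Theorems~\ref{thm: gaussian mechanism in RDP} and~\ref{thm: subsampling amplification}. For (i) and (ii) the mechanism is ``subsample, clip each per-record gradient to $\ell_2$-norm $u$, sum, add $N(0,\sigma^2 u^2 I)$'': the clipping makes the $\ell_2$-sensitivity of the summed gradient at most $u$, so Theorem~\ref{thm: gaussian mechanism in RDP} gives $(\alpha,\alpha u^2/(2\sigma^2))$-RDP for the unsubsampled step and Theorem~\ref{thm: subsampling amplification} upgrades it to $(\alpha,\epsilon'(\alpha,\gamma,u))$-RDP for every integer $\alpha\ge 2$. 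For (iii) the privatized quantity is the aggregated SIN feature vector, and adding or removing one record changes it in $\ell_2$ by at most $\sqrt{m}p$ by Eq.~(\ref{eq: AGG sensitivity}); the same two theorems then give $(\alpha,\epsilon'(\alpha,\gamma_3,\sqrt{m}p))$-RDP per invocation.

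Next I would compose. Applying Theorem~\ref{thm: gaussian composition in RDP} (iterated, by induction, over all invocations) shows that the joint output of the three channels, run $T_1$, $T_2$, $T_3$ times respectively, is $\bigl(\alpha,\,T_1\epsilon'(\alpha,\gamma_1,u_1)+T_2\epsilon'(\alpha,\gamma_2,u_2)+T_3\epsilon'(\alpha,\gamma_3,\sqrt{m}p)\bigr)$-RDP, and by post-processing so is the full output of DPAF, hence in particular the released $G$. Finally Theorem~\ref{thm: convert RDP to DP} converts this to $\bigl(T_1\epsilon'(\alpha,\gamma_1,u_1)+T_2\epsilon'(\alpha,\gamma_2,u_2)+T_3\epsilon'(\alpha,\gamma_3,\sqrt{m}p)+\frac{\log(1/\delta)}{\alpha-1},\,\delta\bigr)$-DP for every $\alpha\ge 2$ and $\delta\in(0,1)$, which is exactly the claimed guarantee (one may then take the infimum over $\alpha$).

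The arithmetic above is routine; the real work, and the step I expect to be the main obstacle, is justifying rigorously that nothing other than channels (i)--(iii) touches $\mathcal{D}$, given that the discriminator is used through three parameter-sharing ``views'' $D$, $\tilde D$, $\hat D$ and three losses $\mathcal{L}_{\text{BCE}}$, $\mathcal{L}'_{\text{BCE}}$, $\mathcal{L}''_{\text{BCE}}$. I would make this precise by listing the releases in execution order and verifying, for each, that it is a randomized map of (earlier releases) $\cup$ (fresh independent noise) $\cup$ ($\mathcal{D}$ seen only through one of (i)--(iii)). The points needing care are: (a) in the asymmetric step the conv2* gradient from $\mathcal{L}'_{\text{BCE}}$ is backpropagated through conv3*/FC* whose current values already depend on $\mathcal{D}$ via DPAGG, but those values are fixed post-processed quantities at that instant and clipping still bounds the per-record conv2* gradient by $u_2$, so the sensitivity in (ii) is unaffected; (b) the conv2* activations on real images reach $\mathcal{L}_{\text{BCE}}$, and thus the SGD updates of conv3*/FC*, only after the Gaussian noise of DPAGG, so those updates are genuine post-processing; (c) the generator update never reads $\mathcal{D}$ directly, since $\partial\mathcal{L}''_{\text{BCE}}/\partial\theta_G$ involves only synthetic samples $G(z)$ and the already-sanitized parameters of $\hat D$; and (d) one must also confirm that the DPSGD($\varepsilon_1$) step admits a well-defined per-record clipped gradient of norm $\le u_1$ despite the (non-private) AGG layer in the classifier mixing a batch in the forward pass, so that Theorems~\ref{thm: gaussian mechanism in RDP} and~\ref{thm: subsampling amplification} apply. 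Once (a)--(d) are established, the RDP composition and the final conversion are immediate.
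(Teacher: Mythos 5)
Your proposal follows essentially the same route as the paper: identify the three data-touching mechanisms (DPSGD on conv1, DPSGD on conv2*, and DPAGG), bound each per-invocation via the Gaussian mechanism with subsampling amplification (Theorems~\ref{thm: gaussian mechanism in RDP} and~\ref{thm: subsampling amplification}), compose over $T_1$, $T_2$, $T_3$ iterations with Theorem~\ref{thm: gaussian composition in RDP}, dispose of conv3*, FC*, and $G$ by post-processing, and convert to $(\varepsilon,\delta)$-DP with Theorem~\ref{thm: convert RDP to DP}. Your points (a)--(d) scrutinizing the parameter-sharing views $D$, $\tilde D$, $\hat D$ are in fact more careful than the paper's own treatment, which asserts the post-processing claims without that level of detail.
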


\begin{proof}
For $C$, our goal is to ensure that the update of conv1 is satisfied with $(\alpha, \epsilon(\alpha))$-RDP for iteration. Note that because conv2, conv3, and FC will be discarded after the training, they do not need a DP guarantee. 
Let the total number of iterations for training $C$ be $T_1$. Then, the DPSGD (through the Gaussian mechanism in Theorem~\ref{thm: gaussian mechanism in RDP}) on conv1 is $(\alpha, T_1\epsilon (\alpha))$-RDP according to Theorem~\ref{thm: gaussian composition in RDP}. However, it can be re-estimated as $(\alpha , T_1 \epsilon^{\prime}(\alpha, \gamma_1, u_1))$-RDP with subsampling rate $\gamma _1$ according to Theorem~\ref{thm: subsampling amplification}. 



For $D$, since conv1's parameters are frozen during the training of $D$, the output of conv1 in $D$ is satisfied with $(\alpha , T_1 \epsilon^{\prime}(\alpha, \gamma_1, u_1))$-RDP, because the update of conv1 in $C$ has been proven to be RDP. 
Let the total number of iterations for training conv2* be $T_2$. Then, the DPSGD (through the Gaussian mechanism in Theorem~\ref{thm: gaussian mechanism in RDP}) on conv2* is $(\alpha, T_2\epsilon (\alpha))$-RDP according to and Theorem~\ref{thm: gaussian composition in RDP}. 
Similarly, the update of conv2* is satisfied with $(\alpha , T_2 \epsilon^{\prime}(\alpha, \gamma_2, u_2))$-RDP with subsampling rate $\gamma _2$ according to Theorem~\ref{thm: subsampling amplification}.
So far, the joint consideration of conv1 and conv2*, (conv1, conv2*), is satisfied with $(\alpha, T_1\epsilon ^{\prime} (\alpha, \gamma_1, u_1 ) + T_2\epsilon ^{\prime}(\alpha, \gamma_2, u_2))$-RDP guarantee according to Theorem~\ref{thm: gaussian composition in RDP}. 
Unlike the cases of conv1 and conv2*, where noise is injected in the backward phase, the noise injection to AGG occurs in the forward phase. More specifically, we set a DPAGG to aggregate input data and add noise in the aggregated data. 
The sensitivity of AGG has been calculated as $\sqrt{m}p$ in Eq. (\ref{eq: AGG sensitivity}). 
Let the number of iterations for the training of $D$ be $T_3$. 
The DPAGG with the noise sampled from $N(0, mp^2\sigma ^2 )$ is satisfied with $(\alpha, T_3\epsilon ^{\prime}(\alpha, \gamma_3, \sqrt{m}p))$-RDP with subsampling ratio $\gamma_3$ according to Theorem~\ref{thm: gaussian mechanism in RDP} and Theorem~\ref{thm: gaussian composition in RDP}. 
Thus, the joint consideration of conv1, conv2*, and DPAGG, (conv1, conv2*, DPAGG), will fullfil $(\alpha, T_1\epsilon ^{\prime}(\alpha, \gamma_1, u_1 ) + T_2\epsilon ^{\prime}(\alpha, \gamma_2, u_2) + T_3\epsilon ^{\prime}(\alpha, \gamma_3, \sqrt{m}p))$-RDP according to Theorem~\ref{thm: gaussian composition in RDP}. Because the DPAGG has DP guarantee, the update of conv3* and FC* is satisfied with RDP by the post-processing. Finally, the update of $G$ does not access the sensitive data and, as a result, is satisfied with RDP by the post-processing.
 

Overall, according to Theorem~\ref{thm: convert RDP to DP}, DPAF is satisfied with $(T_1\epsilon^{\prime} (\alpha, \gamma_1, u_1 ) + T_2\epsilon^{\prime} (\alpha, \gamma_2, u_2 ) + T_3\epsilon^{\prime} (\alpha, \gamma_3, \sqrt{m}p ) + \frac{\log \frac{1}{\delta}}{\alpha - 1}, \delta)$-DP.
\end{proof}

\section{Experiment Evaluation}\label{sec: Experiment Evaluation}
In this section, we present the experiment evaluation of DPAF. We chose representative datasets for the task of data synthesis. After that, DPAF was conducted to generate synthetic data. We then evaluated the utility of the synthetic data based on different settings of hyperparameters.

\subsection{Experiment Setup}\label{sec: Experiment Setup} 
We describe the datasets, baselines, evaluation metrics, and architecture of our canonical implementation of DPAF below. 

\vspace{-0.2cm}\paragraph{Dataset.} In our experiments, we considered MNIST, Fashion MNIST (FMNIST), CelebA, and FFHQ datasets. Both MNIST and FMNIST have $28\times 28$ grayscale images. CelebA contains colorful celebrity images of different sizes. In our experiments, we rescaled all of CelebA images into $64\times 64$ colorful images. Based on CelebA, we created two more datasets, CelebA-Gender and CelebA-Hair, where the former is for binary classification with gender as the label and the latter is for multiclass classification dataset with hair color (black/blonde/brown) as the label. FFHQ contains $1024\times 1024$ colorful facial images. We rescaled FFHQ images into $128\times 128$
images with the gender as the label\footnote{We used the FFHQ labels from \url{https://github.com/DCGM/ffhq-features-dataset/tree/master/json}.} and created FFHQ-Gender dataset for binary classification.

We followed the default training (60000 samples) and testing sets (10000 samples) for both MNIST and FMNIST. In addition, as CelebA does not have inherent training and testing sets, we followed \cite{7410782} to partition CelebA into training and testing sets. FFHQ contains 69471 images with 38388 females and 31083 males. We split the data into 90\% and 10\% for training and testing, respectively. 

\vspace{-0.2cm}\paragraph{Baselines.} We consider the baseline methods, GS-WGAN~\cite{gs-wgan}, DP-MERF~\cite{harder2021dpmerf}, P3GM~\cite{p3gm}, DataLens~\cite{datalens}, G-PATE~\cite{g-pate}, DP-Sinkhorn\cite{dp-sinkhorn}, and PEARL~\cite{pearl}. The implementation of all the baselines is based on the official code\footnote{The official code of GS-WGAN, DP-MERF, DataLens, G-PATE, and DP-Sinkhorn can be found at \url{https://github.com/DingfanChen/GS-WGAN}, \url{https://github.com/ParkLabML/DP-MERF}, \url{https://github.com/AI-secure/DataLens}, \url{https://github.com/AI-secure/G-PATE}, and \url{https://github.com/nv-tlabs/DP-Sinkhorn_code}, respectively}. Though the official code is not available online, we communicated with the authors of PEARL to have a copy. 

Most of the official codes are written for synthesizing images of low resolutions. As we mainly focus on the synthesis of high-dimensional images, we made necessary modifications such as batch size and input size to make them adaptable to different settings.  

\vspace{-0.2cm}\paragraph{Evaluation Metrics.} Given two levels of privacy guarantee, $(1, 10^{-5})$-DP and $(10, 10^{-5})$-DP,  we aim to evaluate the utility of DP image synthesis. The utility can have two dimensions; i.e., the classification accuracy and the visual quality. In the former case, we calculate the predicting accuracy of the classifier trained by synthetic images and tested by real images. The architecture of the classifier used in our experiment is the same as the one used in DataLens~\cite{datalens}. On the other hand, in the latter case, we display the synthetic images for visualization and report Fr\'{e}chet inception distance (FID). 

\begin{table*}[hbt!]
    \centering
    \small
    \addtolength{\tabcolsep}{-2pt}    
    \begin{tabular}{@{}|c|l|ccccccc||c|@{}}
    \hline & \hspace{2ex}$\varepsilon$   & \shortstack{GS-WGAN} & \shortstack{DP-MERF} & \shortstack{P3GM} & \shortstack{G-PATE} & \shortstack{DP-Sinkhorn} & \shortstack{DataLens} & \shortstack{PEARL} &\shortstack{\textsf{DPAF} } \\ 
    \hline\hline
    \multirow{2}{*}{MNIST}         & $\varepsilon = 1$& 0.143  & 0.637  & 0.737 & 0.588 & 0.654 & 0.712 & \textbf{0.782} & 0.501 \\
    & $\varepsilon = 10$ & 0.808 & 0.674     &0.798 & 0.809 & \textbf{0.832} & 0.807 & 0.788 & 0.748      \\ 
 
    \hline
    \multirow{2}{*}{\shortstack{Fashion-MNIST}} & $\varepsilon = 1$ & 0.166  & 0.586 & \textbf{0.722} & 0.581 & 0.564 & 0.648 & 0.683 & 0.543      \\
    & $\varepsilon = 10$ & 0.658  & 0.616   & \textbf{0.748} & 0.693  & 0.711 & 0.706 & 0.649 & 0.640     \\
    \hline
    \multirow{2}{*}{\shortstack{CelebA-Gender}} & $\varepsilon = 1$ & 0.590 & 0.594  & 0.567 & 0.670  & 0.543 & 0.700 & 0.634 & \textbf{0.802}    \\
    & $\varepsilon = 10$ & 0.614     & 0.608    &  0.588  & 0.690 & 0.621 & 0.729 & 0.646 &\textbf{0.826}  \\ 

    \hline
    \multirow{2}{*}{\shortstack{CelebA-Hair}}  & $\varepsilon = 1$ & 0.420  & 0.441   &  0.453  & 0.499 & $\times$ & 0.606 & 0.606 & \textbf{0.675}      \\
    & $\varepsilon = 10$ & 0.523 & 0.449   &  0.486  & 0.622 & $\times$ & 0.622 & 0.626 &\textbf{0.671}      \\ 
    \hline
    \end{tabular}
    \vspace{-0.3cm}
    \caption{Classification accuracy results under $(1, 10^{-5})$-DP and $(10, 10^{-5})$-DP. The rightmost column shows canonical accuracy.}
    \label{tab: classification}
\end{table*}

\vspace{-0.2cm}\paragraph{Canonical Implementation of DPAF.} Basically, DPAF adds the DP feature aggregation on the basis of cDCGAN. In our canonical implementation, the batch size is $24$ for MNIST and FMNIST and $64$ for CelebA and FFHQ. The latent vector sampled from the standard Gaussian distribution is of dimension 100. The asymmetry multiplier $\mu=8$. We also apply gradient compression~\cite{gradientcompression} to the per-sample gradient to keep the top 90\% values only. Our DPAF is configured to be C2-C1-$\times$ for MNIST/FMNIST, C2-C2-C1 for CelebA, and C3-C1-$\times$ for FFHQ, where the notation C$x_1$-C$x_2$-C$x_3$ means that the $D$ uses $x_1$ layers as conv1, $x_2$ layers as conv2*, and $x_3$ layers as conv3*. The notation $\times$ means that the corresponding layer does not exist. We always have two FC* layers.

The notation $(x_1\%, x_2\%, x_3\%)$ refers to the setting, where conv1, conv2*, and DPAGG have $\varepsilon_1=\frac{x_1\cdot \varepsilon}{100}$, $\varepsilon_2=\frac{x_2\cdot \varepsilon}{100}$, and $\varepsilon_3=\frac{x_3\cdot \varepsilon}{100}$, respectively, given the total privacy budget $\varepsilon$. A similar notation is $(x_1, \times, x_3)$, where both conv1 and DPAGG have a privacy budget $\varepsilon_1=x_1$ and $\varepsilon_3=x_3$, respectively, and conv2* has the rest. For example, $(0.1, \times, 0.1)$ means that conv1, conv2, and DPAGG have $0.1$, $9.8$, and $0.1$, respectively, if the total privacy budget is $10$. Throughout Section~\ref{sec: Experiment Results}, \textit{canonical accuracy} means the accuracy from the canonical implementation. 

\subsection{Experiment Results}\label{sec: Experiment Results}
Here, we present our experiment results and ablation study. All of the experiment results below are derived by averaging the results from five independent experiments. 

\subsubsection{Classification Accuracy}\label{sec: Classification Accuracy} Table~\ref{tab: classification} shows the classification results of DPAF and the other baseline methods. One can see from Table~\ref{tab: classification} that DPAF outperforms all of the other baseline methods for CelebA-Gender and CelebA-Hair but, surprisingly, is worse than some of the other baseline methods for MNIST and FMNIST. This can be attributed to the architecture behind the design of DPAF. Particularly, as mentioned in Sections~\ref{sce: Training a Classifier Before Transfer Learning} and \ref{sec: Training a DPGAN After Transfer Learning}, our convolutional layers follow the conventional design; i.e., images are half-sized from the current layer to the next layer, and consequently, the number of convolutional layers is dependent on the input image size. For MNIST and FMNIST, as the image size is smaller, there will be fewer convolutional layers (e.g., C2-C1-$\times$ in our canonical design), thus restricting the generative capability of the generator, given the conventional design that $G$ is the reverse of $D$ (e.g., PGGAN~\cite{pggan}). On the contrary, the larger image size implies more convolutional layers in DPAF, strengthening the generative capability. In this sense, our design also suggests the potential of DPAF in synthesizing images of higher resolutions because a generator with more layers can be adopted. Such an argument is partially supported by the experiment results in Section~\ref{sec: High Resolution 128 Image Synthesis}.

\begin{figure}[htb!]
\centering
\includegraphics[width=0.48\textwidth]{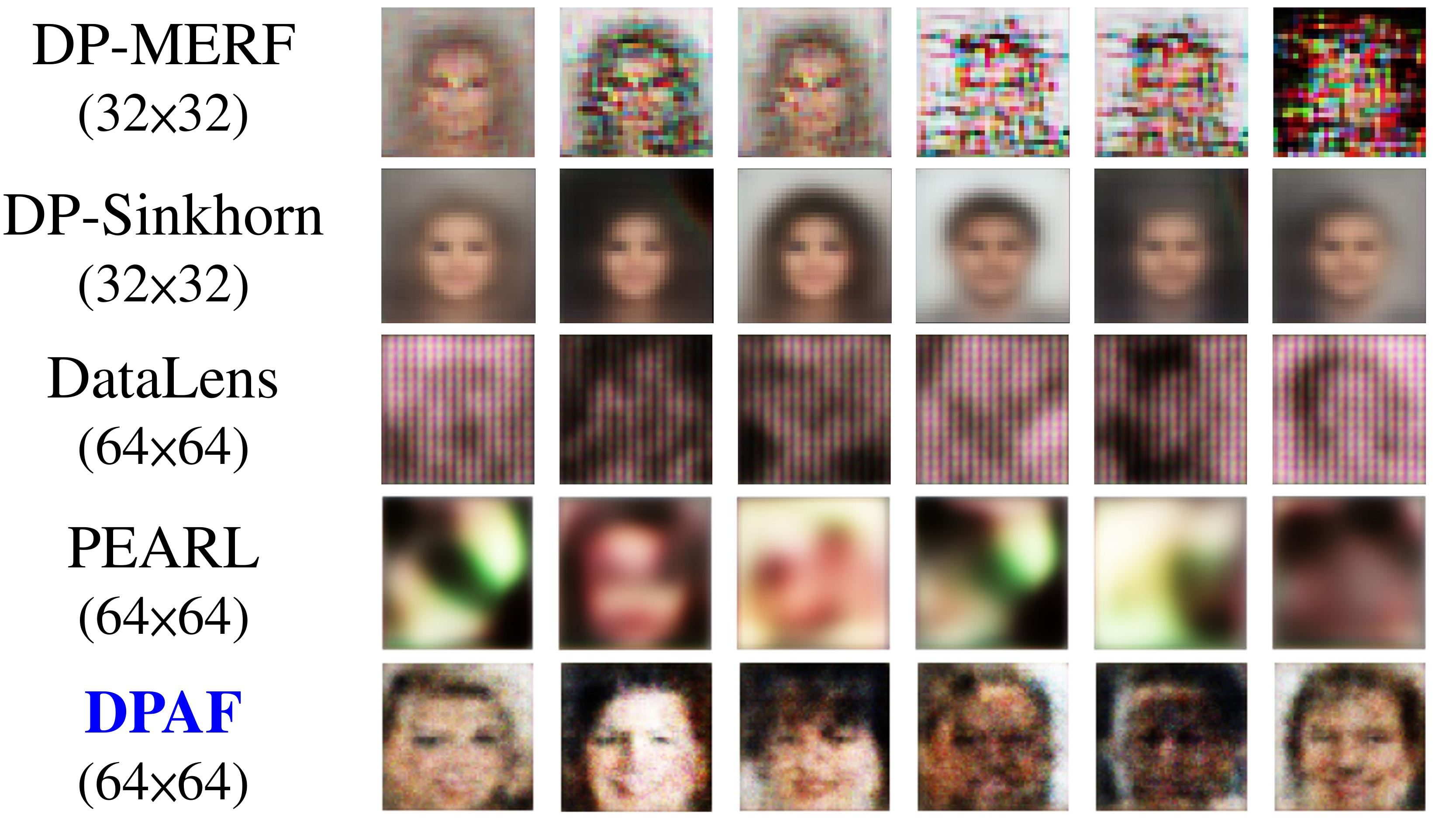}
\caption{Visual results for CelebA-Gender with $\varepsilon = 10$. The left (right) three columns are females (males).}
\label{fig: visual-result-celeba}
\end{figure}

\begin{table}[hbt!]
    \centering
    \small
    \begin{tabular}{@{}|c|l|ccc|@{}}
    \hline & \hspace{2ex}$\varepsilon$   & \shortstack{DataLens}  & \shortstack{PEARL} &\shortstack{\textsf{DPAF} } \\
    \hline\hline
    \multirow{2}{*}{\shortstack{CelebA-Gender}} & $\varepsilon = 1$ & 298  & 303 & \textbf{285} \\ 
    & $\varepsilon = 10$ & 320 & 302 & \textbf{298} \\ 

    \hline
    \multirow{2}{*}{\shortstack{CelebA-Hair}}  & $\varepsilon = 1$ & $\times$ & 338  & \textbf{301} \\ 
    & $\varepsilon = 10$ & $\times$  & 337 & \textbf{298} \\ 
    \hline
    
    \end{tabular}

    \vspace{-0.3cm}
    \caption{The comparison of FIDs for CelebA.}
    \label{tab: fid for celeba}
\end{table}

\subsubsection{Visual Quality}
We first present the visual quality evaluation results in Figure~\ref{fig: visual-result-celeba}. In particular, the DPAF-synthesized images appear more realistic and capture more facial features, such as eyes, lips, and face shape, compared to the uniformed faces generated by DP-Sinkhorn and the highly noisy faces generated by the other baselines. Such a gain comes from the use of SIN and our design of DPAGG; i.e., the aggregated feature is more robust to the DP noise and better to keep the features distinguished after the training. We also present the quantitative results in Table~\ref{tab: fid for celeba}. The lowest FIDs of DPAF in different settings among the baselines are consistent with the visual quality comparison in Figure~\ref{fig: visual-result-celeba}.

\begin{figure}[htb]
\centering
\includegraphics[width=0.48\textwidth]{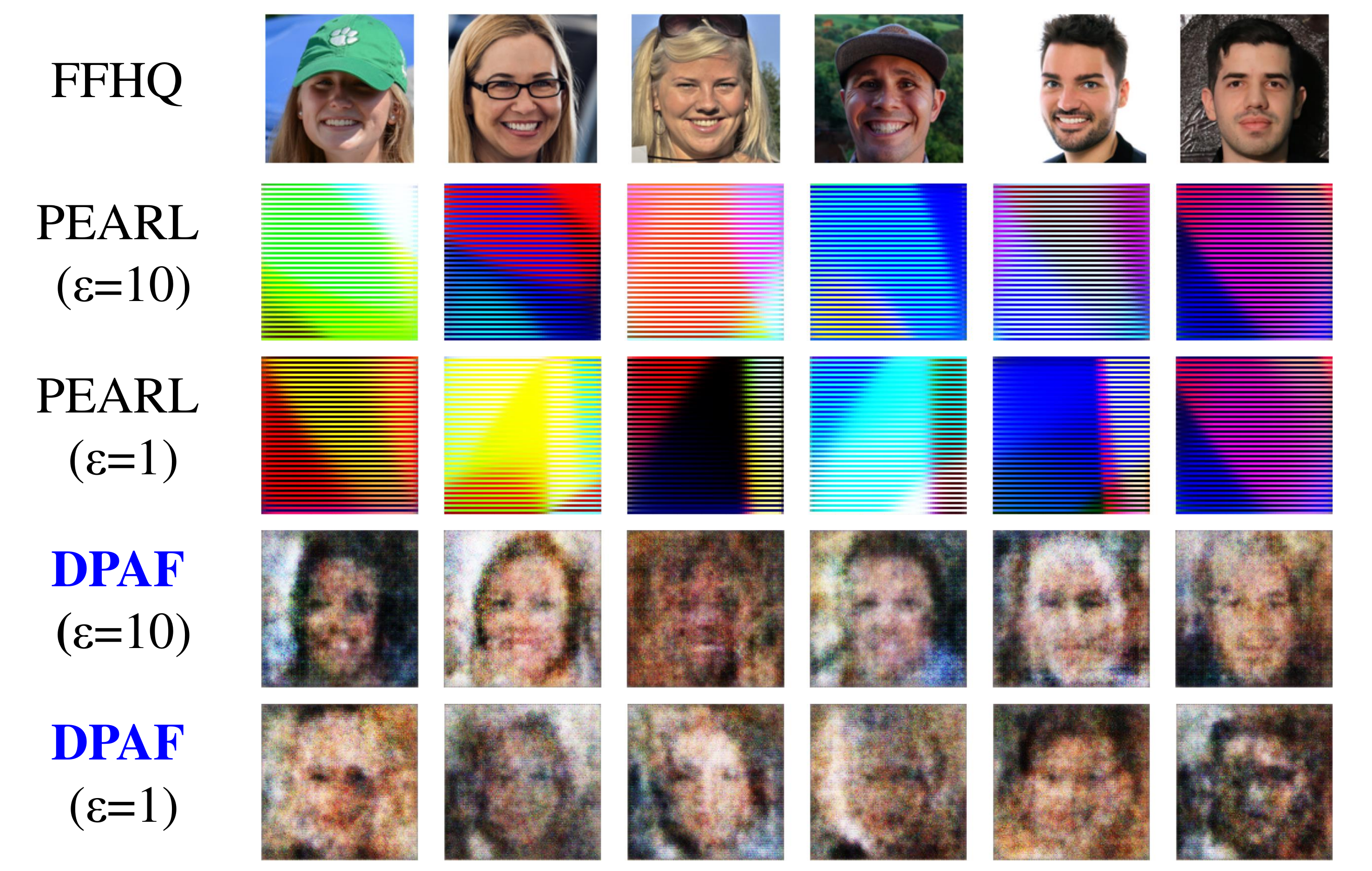}
\caption{Visual results for FFHQ. The left (right) three columns are females (males).}
\label{fig: visual-result-FFHQ}
\end{figure}

\begin{table}[hbt!]
    \centering
    \small
    \begin{tabular}{@{}|c|l|cc|@{}}
    \hline & \hspace{2ex}$\varepsilon$   &\shortstack{PEARL}  & \shortstack{DPAF}  \\
    \hline\hline
    \multirow{2}{*}{\shortstack{FFHQ-Gender}} & $\varepsilon = 1$ & 0.441 $\pm$ 0.019   & \textbf{0.567} $\pm$ 0.038 \\
    & $\varepsilon = 10$ & 0.511 $\pm$ 0.027 & \textbf{0.646} $\pm$ 0.004 \\ 

    \hline
    
    \end{tabular}

    \vspace{-0.3cm}
    \caption{The comparison of accuracy for FFHQ-Gender.}
    \label{tab: ablation_8}
\end{table}

\subsubsection{High Resolution $128\times 128$ Image Synthesis}\label{sec: High Resolution 128 Image Synthesis}
To further demonstrate the advantage of DPAF in synthesizing high-utility and high-dimensional images, we conducted experiments on FFHQ-Gender and the visual results are shown in Figure~\ref{fig: visual-result-FFHQ}, where the images synthesized by PEARL look like a pure noise while the images synthesized by DPAF still preserve facial features. Table~\ref{tab: ablation_8} reports the predicting accuracy on FFHQ-Gender. In Section~\ref{sec: Classification Accuracy}, we claim that the generative capability of DPAF will be increased with the image size. At first sight, we can see from Table~\ref{tab: ablation_8} that the accuracy goes worse compared to CelebA-Gender in Table~\ref{tab: classification}. Nonetheless, this can be explained as follows. First, CelebA-Gender and FFHQ-Gender are two different datasets, having different distributions. The direct comparison between the accuracy of CelebA-Gender and FFHQ-Gender remains doubtful. Second, synthesizing $128\times 128$ images has reached the limit of the conventional GANs without using modern techniques such as residual blocks~\cite{7780459}. Synthesizing images of higher resolutions requires much more sophisticated GANs (e.g., PGGAN~\cite{pggan}. Applying the techniques in DPAF to modern GANs remains to be unexplored but would be our research direction in the future. 

\subsubsection{Privacy Budget Allocation w/o Transfer Learning}\label{sec: Different Strategies for Privacy Budget Allocation without Transfer Learning}
Because we allocate a very limited privacy budget to conv1 in our canonical implementation, a natural question that arises is whether transfer learning is necessary. In other words, if conv1 cannot effectively learn from the data, a reasonable design choice is to abandon transfer learning and invest the privacy budget in DPAGG. Below we examine the predicting accuracy under three strategies for privacy budget allocation in the absence of transfer learning. 

\begin{table}[hbt!]
    \centering
    \scriptsize
    \begin{tabular}{@{}|c|l|ccc|@{}}
    \hline & \hspace{2ex}$\varepsilon$   &\shortstack{DPAF}  & \shortstack{($\times$, 0.5)} & \shortstack{($\times$, 0.2)} \\ 
    \hline\hline
    \multirow{2}{*}{\shortstack{CelebA-Gender}} & $\varepsilon = 1$ & \textbf{0.802} $\pm$ 0.018   & 0.752 $\pm$ 0.045 & 0.774 $\pm$ 0.012  \\
    & $\varepsilon = 10$ & \textbf{0.826} $\pm$ 0.010 & 0.793 $\pm$ 0.024 & 0.700 $\pm$ 0.093  \\ 

    \hline
    \multirow{2}{*}{\shortstack{CelebA-Hair}}  & $\varepsilon = 1$ & \textbf{0.675} $\pm$ 0.013 & 0.635 $\pm$ 0.035  & 0.667 $\pm$ 0.029   \\
    & $\varepsilon = 10$ & 0.671 $\pm$ 0.014  & \textbf{0.681} $\pm$ 0.015 & 0.670 $\pm$ 0.016  \\ 
    \hline
    \end{tabular}
    \vspace{-0.3cm}
    \caption{The accuracy of random parameters for conv1.}
    \label{tab: ablation_1}
\end{table}

\vspace{-0.2cm}\paragraph{Random Parameters for conv1.} We consider random weights of conv1; i.e., all of the weights in conv1 are sampled uniformly at random from the zero-mean Gaussian distribution with a standard deviation of $0.02$ and will never be updated. The results are shown in Table~\ref{tab: ablation_1}, where the notations $(\times, x_3)$ means that privacy budget $\varepsilon_3=x_3$ is allocated to DPAGG while the rest of budget is for conv2*. One can see that the accuracy from ($\times$, 0.5) and ($\times$, 0.2) is only slightly lower than the canonical accuracy. As conv1 is supposed to learn low-level features, even if conv1 uses random features, the learning of conv2*, conv3*, and FC* can be adapted to random conv1 and be performed well. However, according to our empirical experience, we still spend a very limited privacy budget on conv1 to avoid undesirable cases, where some feature maps happen to contain only zero's or nearly zero values, making such feature maps useless. The higher variances of ($\times$, 0.5) and ($\times$, 0.2) also justify the above design choice.

\begin{table}[hbt!]
    \centering
    \scriptsize
    \begin{tabular}{@{}|c|l|ccc|@{}}
    \hline & \hspace{2ex}$\varepsilon$   &\shortstack{DPAF}  & \shortstack{(0.1, $\times$ , 0.1)} & \shortstack{(33\%, 34\%, 33\%)} \\ 
    \hline\hline
    \multirow{2}{*}{\shortstack{CelebA-Gender}} & $\varepsilon = 1$ & \textbf{0.802} $\pm$ 0.018   & 0.594 $\pm$ 0.112 & 0.727 $\pm$ 0.143  \\
    & $\varepsilon = 10$ & \textbf{0.826} $\pm$ 0.010 & 0.747 $\pm$ 0.100 & 0.817 $\pm$ 0.024  \\ 

    \hline
    \multirow{2}{*}{\shortstack{CelebA-Hair}}  & $\varepsilon = 1$ & \textbf{0.675} $\pm$ 0.013 & 0.424 $\pm$ 0.079  & 0.642 $\pm$ 0.038   \\
    & $\varepsilon = 10$ & 0.671 $\pm$ 0.014  & 0.599 $\pm$ 0.116 & \textbf{0.685} $\pm$ 0.018   \\ 
    \hline
    \end{tabular}
    \vspace{-0.3cm}
    \caption{The classification accuracy of updating conv1 during training of $D$.}
    \label{tab: ablation_2}
\end{table}

\vspace{-0.2cm}\paragraph{Updating conv1 During Training of $D$.} Here, we do not perform transfer learning but still use DPSGD($\varepsilon_1$) to update conv1 during the training of $D$. Table~\ref{tab: ablation_2} shows the results, where the canonical implementation outperforms the other configurations. There are two reasons.  First, while the update of conv1 is accomplished during the training of $C$, because conv2, conv3, and FC in $C$ are updated by SGD, conv1 is more informative. While the update of conv1 is accomplished during the training of $D$, conv1 is less informative because of the noise accumulation. Second, in the absence of transfer learning, more layers (parameters) need to be updated during the training of $D$, which is more difficult to train them well from the adversarial learning perspective. Note that $(0.1, \times, 0.1)$ in the second column of Table~\ref{tab: ablation_2} means that we update conv1 through DPSGD($\varepsilon_1$) with $\varepsilon_1=0.1$ in the training of $D$, though the same $(0.1, \times, 0.1)$ in the context of using transfer learning has a different interpretation, as shown in Section~\ref{sec: Experiment Setup}.


\begin{table}[hbt!]
    \centering
    \scriptsize
    \begin{tabular}{@{}|c|l|ccc|@{}}
    \hline
     & \hspace{2ex}$\varepsilon$   &\shortstack{DPAF}  & \shortstack{(50\%, 50\%)} & \shortstack{($\times$, 0.1)} \\ 
    \hline\hline
    \multirow{2}{*}{\shortstack{CelebA-Gender}} & $\varepsilon = 1$ & \textbf{0.802} $\pm$ 0.018   & 0.768 $\pm$ 0.036 &  0.748 $\pm$ 0.072  \\
    & $\varepsilon = 10$ & \textbf{0.826} $\pm$ 0.010 & 0.759 $\pm$ 0.030 &   0.790 $\pm$ 0.020\\ 

    \hline
    \multirow{2}{*}{\shortstack{CelebA-Hair}}  & $\varepsilon = 1$ & \textbf{0.675} $\pm$ 0.013 & 0.661 $\pm$ 0.034  &   0.643 $\pm$ 0.028 \\
    & $\varepsilon = 10$ & \textbf{0.671} $\pm$ 0.014  & 0.648 $\pm$ 0.038 &   0.660 $\pm$ 0.018 \\ 
    \hline
    
    \end{tabular}

    \vspace{-0.3cm}
    \caption{The classification accuracy of joint updating conv1 and conv2* during training of $D$.}
    \label{tab: ablation_3}
\end{table}

\vspace{-0.2cm}\paragraph{Joint Updating conv1 and conv2* During Training of $D$.}
In this case, conv1 and conv2* are jointly considered and will be updated through the DPSGD together. We see (conv1, conv2*) as a larger component. Compared to the individual conv1 and conv2*, gradient clipping in DPSGD may cause more severe information loss and DPSGD will cause more severe damage of the gradient structure. The results in Table~\ref{tab: ablation_3} support the above arguments. 

Furthermore, by comparing the $(0.1, \times, 0.1)$ column in Table~\ref{tab: ablation_2} and $(\times, 0.1)$ column in Table~\ref{tab: ablation_3} (due to their similar setting), we can find that the accuracy in the former is consistently lower than the one in the latter. Unlike Table~\ref{tab: ablation_3}, conv1 and con2* in Table~\ref{tab: ablation_2} are treated separately and hence suffer from privacy budget splitting, resulting in worse accuracy. 

\begin{table*}[hbt!]
    \centering
    \small
    \addtolength{\tabcolsep}{-4pt}    
    \begin{tabular}{@{}|c|l|cccccc|@{}}
    \hline & \hspace{2ex}$\varepsilon$   &\shortstack{DPAF $(0.1, \times, 0.1)$}  & \shortstack{(20\%, 20\%, 60\%)} & \shortstack{(20\%, 60\%, 20\%)} & \shortstack{(20\%, 40\%, 40\%)} & \shortstack{(30\%, 20\%, 50\%)} & \shortstack{(30\%, 50\%, 20\%)} \\ 
    \hline\hline
    \multirow{2}{*}{\shortstack{CelebA-Gender}} & $\varepsilon = 1$ & \textbf{0.802} $\pm$ 0.018   & 0.741 $\pm$ 0.074 & 0.801 $\pm$ 0.035  & 0.793 $\pm$ 0.030 & 0.771 $\pm$ 0.035  & 0.795 $\pm$ 0.033 \\
    & $\varepsilon = 10$ & \textbf{0.826} $\pm$ 0.010 & 0.787 $\pm$ 0.018 & 0.820 $\pm$ 0.020  &  0.790 $\pm$ 0.017 & 0.767 $\pm$ 0.040 & 0.813 $\pm$ 0.013 \\ 

    \hline
    \multirow{2}{*}{\shortstack{CelebA-Hair}}  & $\varepsilon = 1$ & 0.675 $\pm$ 0.013 & 0.570 $\pm$ 0.144  & 0.663 $\pm$ 0.031   &  0.657 $\pm$ 0.023  & 0.643 $\pm$ 0.046 & 0.666 $\pm$ 0.024 \\
    & $\varepsilon = 10$ & \textbf{0.671} $\pm$ 0.014  & 0.639 $\pm$ 0.031 & 0.619 $\pm$ 0.038   &  0.659 $\pm$ 0.016  & 0.598 $\pm$ 0.050 & 0.577 $\pm$ 0.094      \\ 
    \hline
    
    \end{tabular}

    \vspace*{3 pt}
    
    \begin{tabular}{@{}|c|l|ccccc|@{}}
    \hline & \hspace{2ex}$\varepsilon$   & \shortstack{(30\%, 40\%, 30\%)} & \shortstack{(30\%, 30\%, 40\%)} & \shortstack{(40\%, 30\%, 30\%)} & \shortstack{(40\%, 20\%, 40\%)} & \shortstack{(40\%, 40\%, 20\%)} \\ 
    \hline\hline
    \multirow{2}{*}{\shortstack{CelebA-Gender}} & $\varepsilon = 1$ & 0.791 $\pm$ 0.012 & 0.704 $\pm$ 0.164  & 0.745 $\pm$ 0.102 & 0.759 $\pm$ 0.025  & 0.763 $\pm$ 0.038 \\
    & $\varepsilon = 10$ & 0.799 $\pm$ 0.022 & 0.799 $\pm$ 0.023  &  0.797 $\pm$ 0.015 & 0.782 $\pm$ 0.032 & 0.798 $\pm$ 0.022 \\ 

    \hline
    \multirow{2}{*}{\shortstack{CelebA-Hair}}  & $\varepsilon = 1$ & \textbf{0.677} $\pm$ 0.011  & 0.653 $\pm$ 0.024   &  0.646 $\pm$ 0.039  & 0.454 $\pm$ 0.150 & 0.641 $\pm$ 0.064 \\
    & $\varepsilon = 10$ & 0.610 $\pm$ 0.018 & 0.643 $\pm$ 0.012   &  0.628 $\pm$ 0.011  & 0.634 $\pm$ 0.036 & 0.645 $\pm$ 0.021     \\ 
    \hline
    
    \end{tabular}

    \vspace{-0.3cm}
    \caption{The classification accuracy of different privacy budget allocations with transfer learning.}
    \label{tab: ablation_0}
\end{table*}

\subsubsection{Privacy Budget Allocation w/ Transfer Learning}\label{sec: Different Strategies for Privacy Budget Allocation for Transfer Learning} 
Given the regular use of DPAF (i.e., DPAF with transfer learning), we aim to examine the impact of different budget allocations on accuracy. 

A general guideline for allocating privacy budgets is that the earlier (latter) layers should earn more (fewer) budgets. The rationale is that the earlier layers learn the low-level features and the latter layers will be adapted to the low-level features. Once the earlier layers have only a limited budget and the parameters fluctuate, the latter layers can hardly be adapted to the fast change of earlier layers and can hardly learn informative parameters. However, during the training of $D$, conv1 is frozen and does not need to be updated. In addition, as mentioned in Section~\ref{sec: Different Strategies for Privacy Budget Allocation without Transfer Learning}, $\varepsilon_1$ can be a small value. The gradient vector may have many small or even nearly zero values, which can easily be affected by the DP noise. On the other hand, the aggregated vector output by DPAGG is designed to have larger values, as described in Section~\ref{sec: Overview}, for better robustness against the DP noise. Thus, a reasonable choice is to $\varepsilon_2>\varepsilon_3$. 

The above arguments can be confirmed empirically because we can see from Table~\ref{tab: ablation_0} that the canonical setting $(0.1, \times, 0.1)$ that follows the above discussion outperforms the other settings. 

\begin{table*}[hbt!]
    \centering
    \small
    \vspace{-7pt}
    \addtolength{\tabcolsep}{-2pt}    
    \begin{tabular}{@{}|c|l|cccccc|@{}}
    \hline & \hspace{2ex}$\varepsilon$   & \shortstack{C1-C1-$\times$}  & \shortstack{C1-C2-$\times$} & \shortstack{C2-C1-$\times$} & \shortstack{C1-C3-$\times$} & \shortstack{C2-C2-$\times$} & \shortstack{C3-C1-$\times$} \\
    \hline\hline
    \multirow{2}{*}{\shortstack{CelebA-Gender}} & $\varepsilon = 1$ & 0.629 $\pm$ 0.040 & 0.737 $\pm$ 0.025  & \textbf{0.824} $\pm$ 0.025 & 0.661 $\pm$ 0.144  & 0.805 $\pm$ 0.021  & 0.811 $\pm$ 0.020 \\
    & $\varepsilon = 10$ & 0.720 $\pm$ 0.045 & 0.733 $\pm$ 0.038  &  0.762 $\pm$ 0.079 & 0.729 $\pm$ 0.032 & \textbf{0.786} $\pm$ 0.018  & 0.751 $\pm$ 0.036 \\ 

    \hline
    \multirow{2}{*}{\shortstack{CelebA-Hair}}  & $\varepsilon = 1$ & 0.423 $\pm$ 0.089  & 0.475 $\pm$ 0.120   &  0.643 $\pm$ 0.014  & \textbf{0.662} $\pm$ 0.019 & 0.519 $\pm$ 0.095  & 0.623 $\pm$ 0.038\\
    & $\varepsilon = 10$ & 0.565 $\pm$ 0.019 & 0.644 $\pm$ 0.024   &  0.639 $\pm$ 0.039  & 0.657 $\pm$ 0.023 & \textbf{0.683} $\pm$ 0.022  & 0.659 $\pm$ 0.019  \\ 
    \hline
    \end{tabular}
    \vspace{-0.3cm}
    \caption{The classification accuracy of different layer architecture for conv1 and  conv2*.}
    \label{tab: ablation_4}
\end{table*}

\begin{table*}[hbt!]
    \centering
    \small
    \vspace{-8pt}
    \begin{tabular}{@{}|c|l|ccccc|@{}}
    \hline & \hspace{2ex}$\varepsilon$   &\shortstack{DPAF (C2-C2-C1)}  & \shortstack{C1-C3-C1} & \shortstack{C2-C1-C1} & \shortstack{C2-C1-C2} & \shortstack{C3-C1-C1} \\
    \hline\hline
    \multirow{2}{*}{\shortstack{CelebA-Gender}} & $\varepsilon = 1$ & \textbf{0.802} $\pm$ 0.018   & 0.448 $\pm$ 0.164 & 0.673 $\pm$ 0.099  & 0.505 $\pm$ 0.067 & 0.800 $\pm$ 0.017 \\ & $\varepsilon = 10$ & \textbf{0.826} $\pm$ 0.010 & 0.727 $\pm$ 0.039 & 0.803 $\pm$ 0.022  &  0.514 $\pm$ 0.091 & 0.820 $\pm$ 0.015 \\ 

    \hline
    \multirow{2}{*}{\shortstack{CelebA-Hair}}  & $\varepsilon = 1$ & \textbf{0.675} $\pm$ 0.013 & 0.356 $\pm$ 0.077  & 0.540 $\pm$ 0.036   &  0.354 $\pm$ 0.038  & 0.669 $\pm$ 0.018 \\ & $\varepsilon = 10$ & \textbf{0.671} $\pm$ 0.014  & 0.368 $\pm$ 0.074 & 0.664 $\pm$ 0.035   &  0.352 $\pm$ 0.055  & 0.670 $\pm$ 0.016 \\ 
    \hline
    
    \end{tabular}

    \vspace{-0.3cm}
    \caption{The classification accuracy of different layer architecture for conv1,  conv2*, and conv3*.}
    \label{tab: ablation_5}
\end{table*}

\begin{table*}[hbt!]
    \centering
    \small
    \vspace{-8pt}
    \begin{tabular}{@{}|c|l|ccccc|@{}}
    \hline & \hspace{2ex}$\varepsilon$   &\shortstack{$\mu=2$}  & \shortstack{$\mu=4$} & \shortstack{DPAF ($\mu=8$)} & \shortstack{$\mu=10$} & \shortstack{$\mu=20$}\\
    \hline\hline
    \multirow{2}{*}{\shortstack{CelebA-Gender}} & $\varepsilon = 1$ & 0.775 $\pm$ 0.037   & 0.688 $\pm$ 0.070 & \textbf{0.802} $\pm$ 0.018 & 0.772 $\pm$ 0.036 & 0.773 $\pm$ 0.030 \\ & $\varepsilon = 10$ & 0.665 $\pm$ 0.162 & 0.819 $\pm$ 0.033 & \textbf{0.826} $\pm$ 0.010 & 0.745 $\pm$ 0.010 & 0.735 $\pm$ 0.110\\ 

    \hline
    \multirow{2}{*}{\shortstack{CelebA-Hair}}  & $\varepsilon = 1$ & 0.578 $\pm$ 0.094 & 0.642 $\pm$ 0.038  & \textbf{0.675} $\pm$ 0.013 & 0.656 $\pm$ 0.017 & 0.648 $\pm$ 0.011 \\ & $\varepsilon = 10$ & 0.670 $\pm$ 0.027  & 0.666 $\pm$ 0.041 & \textbf{0.671} $\pm$ 0.014 & 0.670 $\pm$ 0.015 & 0.668 $\pm$ 0.020 \\ 
    \hline
    
    \end{tabular}

    \vspace{-0.3cm}
    \caption{The classification accuracy of different asymmetry multipliers $\mu$'s.}
    \label{tab: ablation_6}
\end{table*}

\begin{table}[hbt!]
    \centering
    \small
    \vspace{-15pt}
    \addtolength{\tabcolsep}{-3pt}    
    \begin{tabular}{@{}|c|l|ccc|c|c|@{}|}
    \hline & \hspace{2ex}$\varepsilon$ &\shortstack{DPAF}   &\shortstack{Trans($D$)}  &\shortstack{Trans($G+D$)}  \\
    \hline\hline
    \multirow{2}{*}{\shortstack{CelebA\\-Gender}} & $\varepsilon = 1$ & 0.802 $\pm$ 0.018 & 0.819 $\pm$ 0.020 & \textbf{0.820} $\pm$ 0.027 \\
    & $\varepsilon = 10$ & 0.826 $\pm$ 0.010 & 0.830 $\pm$ 0.018 & \textbf{0.831} $\pm$ 0.023  \\ 

    \hline
    \multirow{2}{*}{\shortstack{CelebA\\-Hair}}  & $\varepsilon = 1$ & 0.675 $\pm$ 0.013 & 0.663 $\pm$ 0.038 & \textbf{0.695} $\pm$ 0.015   \\
    & $\varepsilon = 10$ & 0.671 $\pm$ 0.014 & \textbf{0.695} $\pm$ 0.036 & 0.693 $\pm$ 0.012  \\ 
    \hline
    \end{tabular}
    \vspace{-0.3cm}
    \caption{The accuracy of DPAF with public data pre-training.}
    \label{tab: ablation_10}
\end{table}

\begin{table}[hbt!]
    \centering
    \small
    \vspace{-8pt}
    \addtolength{\tabcolsep}{-3pt}    
    \begin{tabular}{@{}|c|l|ccc|@{}}
    \hline & \hspace{2ex}$\varepsilon$ & \shortstack{DPAF}  &\shortstack{w/o GC}  & \shortstack{TOPAGG~\cite{datalens}} \\
    \hline\hline
    \multirow{2}{*}{\shortstack{CelebA\\-Gender}} & $\varepsilon = 1$ & \textbf{0.802} $\pm$ 0.018 & 0.725 $\pm$ 0.150   & 0.549 $\pm$ 0.075\\ 
    & $\varepsilon = 10$ & \textbf{0.826} $\pm$ 0.010 & 0.818 $\pm$ 0.022  & 0.530 $\pm$ 0.097 \\ 

    \hline
    \multirow{2}{*}{\shortstack{CelebA\\-Hair}}  & $\varepsilon = 1$ & \textbf{0.675} $\pm$ 0.013 & 0.673 $\pm$ 0.016 & 0.359 $\pm$ 0.079\\ 
    & $\varepsilon = 10$ & 0.671 $\pm$ 0.014 & \textbf{0.678} $\pm$ 0.017  & 0.375 $\pm$ 0.056\\ 
    \hline
    
    \end{tabular}

    \vspace{-0.3cm}
    \caption{The classification accuracy of DPAF with different gradient compression strategies.}
    \label{tab: ablation_7}
\end{table}

\begin{table}[hbt!]
    \centering
    \small
    \vspace{-6pt}
    \begin{tabular}{@{}|c|l|cc|c|@{}}
    \hline & \hspace{2ex}$\varepsilon$ &\shortstack{DPAF}  &\shortstack{w/ TS}  \\
    \hline\hline
    \multirow{2}{*}{\shortstack{CelebA-Gender}} & $\varepsilon = 1$ & \textbf{0.802} $\pm$ 0.018 & \textbf{0.802} $\pm$ 0.015 \\
    & $\varepsilon = 10$ & \textbf{0.826} $\pm$ 0.010 & 0.806 $\pm$ 0.025 \\ 

    \hline
    \multirow{2}{*}{\shortstack{CelebA-Hair}}  & $\varepsilon = 1$ & \textbf{0.675} $\pm$ 0.013 & 0.594 $\pm$ 0.031 \\
    & $\varepsilon = 10$ & \textbf{0.671} $\pm$ 0.014 & 0.620 $\pm$ 0.043  \\ 
    \hline
    
    \end{tabular}

    \vspace{-0.3cm}
    \caption{The classification accuracy of DPAF with tempered sigmoid (TS) activation functions.}
    \label{tab: ablation_9}
\end{table}

\subsubsection{Number of Layers for conv1, conv2*, and conv3*}
Using Celeba-Gender and CelebA-Hair as examples, we aim to know which layer configuration will result in better accuracy. As both Celeba-Gender and CelebA-Hair are $64\times 64$, we know that there are at most five layers in total. Note that, in contrast to ordinary GANs, deliberately setting more layers in DPGANs may, in turn, hurt the training result \cite{6979031} because the lengthier gradient will lead to greater information loss, failing the convergence, according to our experience. There are too many configurations to exhaustively examine. Table~\ref{tab: ablation_4} shows the only results of accuracy in the cases where conv1 and conv2* jointly occupy at most four layers. From Table~\ref{tab: ablation_4}, we know that C2-C1-$\times$, C2-C2-$\times$, C3-C1-$\times$, and C1-C3-$\times$ result in better accuracy. Thus, given the above results, we include the consideration of conv3* in Table~\ref{tab: ablation_5}, because Table~\ref{tab: ablation_4} does not consider conv3*. The results in Table~\ref{tab: ablation_5} support our design choice for the canonical implementation of C2-C2-C1 because it outperforms the other settings.

\subsubsection{The Impact of Asymmetry Multiplier $\mu$.}
A larger $\mu$ implies a much larger budget for updating conv2*, given $\varepsilon_3$ for DPAGG. Obviously, the increased $\mu$ raises accuracy because conv2* virtually has more budget. Moreover, Table~\ref{tab: ablation_6} supports our claim in Section~\ref{sec: Discussion} that $\mu$ cannot be arbitrarily increased. The reason is that the increased $\mu$ also leads to a less frequent update of conv2*, which may in turn degrades the utility.


\subsubsection{The Other Techniques in Enhancing Accuracy}
Many techniques have been proposed to reduce the negative impact of DPSGD on model training. We examine three of them to see whether they provide similar benefits to DPAF. 

\vspace{-0.2cm}\paragraph{Pre-Training the Model with Public Data}
The recent development of DP classifiers and DPGANs has witnessed that extra data may help improve the performance of DP models~\cite{unlocking, zhang2018, Tramr2020DifferentiallyPL}. Here, we want to examine whether pre-training the model with public data helps DPAF raise its utility. Here, the common setting in Table~\ref{tab: ablation_10} is that we follow DPAF to train $C$, perform transfer learning, and then train $G$ and $D$ on the CIFAR-10 dataset without considering DP. After that, DPAF is used to train DPGAN with the pre-trained $D$ as $D$ and the randomly initialized parameters as $G$. We additionally train DPGAN completely based on the pre-trained parameters for both $G$ and $D$. One can see from Table~\ref{tab: ablation_10} that the extra data still helps the utility of DPAF.

\vspace{-0.2cm}\paragraph{The Impact of Gradient Compression.}
Gradient compression (GC)~\cite{gradientcompression} is originally proposed to reduce the communication cost in federated learning. The rationale behind gradient compression is that most of the values in the gradient contribute nearly no information on the update. Different from the original case, where GC works on the gradient averaged over the samples in a batch, the canonical implementation of DPAF adopts GC to keep only the top 90\% values of per-sample gradients and then performs the averaging. However, we still want to examine whether GC can help DPSGD. The comparison between the DPAF column and ``w/o GC'' column in Table~\ref{tab: ablation_7} still shows that DPSGD can benefit from GC because the information loss from gradient clipping can be mitigated.

TOPAGG~\cite{datalens} is a modified DPSGD that works on compressed and quantized gradients. The GC in TOPAGG is configurated to keep the top-$k$ values only\footnote{For CelebA-Gender, $k=200$ with $\varepsilon=1$ and $k=3000$ with $\varepsilon=10$. For CelebA-Hair, $k=150$ with $\varepsilon=1$ and $k=200$ with $\varepsilon=10$.}. Nevertheless, TOPAGG gains lower accuracy. This can be explained by considering the design of TOPAGG. In particular, the success of TOPAGG, in essence, relies on training a large number of teacher classifiers\cite{datalens, dp-sinkhorn}. As DPAF does not fit such a requirement, TOPAGG on DPAF does not perform well. 

\vspace{-0.2cm}\paragraph{Tempered Sigmoid Activation Function}
Papernot et al.~~\cite{Papernot2020TemperedSA} find that exploding activations cause the unclipped gradient magnitude to increase and therefore gradient clipping leads to more information loss. Thus, tempered sigmoid (TS)~\cite{Papernot2020TemperedSA}, a family of activation functions, is proposed to replace the conventional activation functions in DPSGD. Table~\ref{tab: ablation_9} shows the results, where hyperbolic tangent ($\tanh$), as a representative of TS, is used to replace the leaky ReLU in our canonical DPAF. In our test, $\tanh$ is used in DPAF, and we can see from Table~\ref{tab: ablation_9} that it, in turn, leads to worse accuracy. This can be explained as follows. First, Papernot et al.~~\cite{Papernot2020TemperedSA} conduct the experiments on DP classifiers only. Whether TS can raise the utility of DPGANs remains unknown. Second, a bounded activation (e.g., sigmoid and $\tanh$) easily causes gradient vanishing and therefore is rarely used in practice. On the contrary, the unbounded activations (e.g., ReLU and leaky ReLU) are more capable of avoiding gradient vanishing~\cite{cdcgan}. conv1 and conv2* in DPAF are updated by DPSGD, but conv3* and FC* are updated by SGD. While TS is beneficial to DPSGD (for conv1 and conv2*) but harmful to SGD (conv2* and FC*). Overall, adopting $\tanh$ in DPAF slightly degrades the utility. 

\section{Conclusion}\label{sec: Conclusion}
Overall, we propose a novel and effective DPGAN, DPAF, which can synthesize high-dimensional image data. Fundamentally different from the prior works, DPAF is featured by the DP feature aggregation in the forward phase, which significantly improves the robustness against noise. In addition, we propose a novel asymmetric training strategy, which determines an ideal batch size. We formally prove the privacy of DPAF. Extensive experiments demonstrate superior performance compared to the previous state-of-the-art methods. 

\clearpage
{\footnotesize \bibliographystyle{plainurl}
\bibliography{sample}}

\clearpage
\section*{Appendix}

\begin{table}[hbt!]
    \centering
    \small
    \begin{tabular}{@{}|c|l|cccccc|@{}}
    \hline & \hspace{2ex}$\varepsilon$   & \shortstack{C1-C1-$\times$}  & \shortstack{C1-C2-$\times$} & \shortstack{C2-C1-$\times$} & \shortstack{C1-C3-$\times$} & \shortstack{C2-C2-$\times$} & \shortstack{C3-C1-$\times$} \\
    \hline\hline
    \multirow{2}{*}{\shortstack{CelebA-Gender}} & $\varepsilon = 1$ & 0.629 $\pm$ 0.040 & 0.737 $\pm$ 0.025  & \textbf{0.824} $\pm$ 0.025 & 0.661 $\pm$ 0.144  & 0.805 $\pm$ 0.021  & 0.811 $\pm$ 0.020 \\
    & $\varepsilon = 10$ & 0.720 $\pm$ 0.045 & 0.733 $\pm$ 0.038  &  0.762 $\pm$ 0.079 & 0.729 $\pm$ 0.032 & \textbf{0.786} $\pm$ 0.018  & 0.751 $\pm$ 0.036 \\ 

    \hline
    \multirow{2}{*}{\shortstack{CelebA-Hair}}  & $\varepsilon = 1$ & 0.423 $\pm$ 0.089  & 0.475 $\pm$ 0.120   &  0.643 $\pm$ 0.014  & \textbf{0.662} $\pm$ 0.019 & 0.519 $\pm$ 0.095  & 0.623 $\pm$ 0.038\\
    & $\varepsilon = 10$ & 0.565 $\pm$ 0.019 & 0.644 $\pm$ 0.024   &  0.639 $\pm$ 0.039  & 0.657 $\pm$ 0.023 & \textbf{0.683} $\pm$ 0.022  & 0.659 $\pm$ 0.019  \\ 
    \hline
    \end{tabular}
    \caption{The classification accuracy of different layer architecture for conv1 and  conv2*.}
    \label{tab: ablation_4}
\end{table}

\begin{table}[hbt!]
\centering
\begin{tabular}{@{}|p{1.2cm}||p{6.8cm}|@{}}
  \hline
  & Description \\
  \hline
   $\mathcal{D}$, $\mathcal{D}'$ & The neighboring data \\
  \hline
   $C$ & The classifier in DAF before transfer learning \\
  \hline
   $G$ & The generator in DAF after transfer learning \\
  \hline
   $D$ & The discriminator in DAF after transfer learning \\
  \hline
   $\mu$ & Asymmetry multiplier \\
  \hline
   $n_{\text{critic}}$ & Number of critic iterations per generator iteration \\
  \hline
   $\epsilon$ & The privacy loss \\
  \hline
   $\delta$ & The probability of violating DP\\
  \hline
   $\sigma^2$ & The variance of Gaussian distribution \\
  \hline
   $M_f$ & The feature extractor (FE) \\
  \hline
   $M_c$ & The label predictor \\
  \hline
   $\theta_f$ & The parameters of the FE\\
  \hline
   $\theta_c$ & The parameters of the label predictor \\
  \hline
   $u$ & The clipping threshold \\
  \hline
   $\text{clip}_u$ & The gradient clipping function with clipping threshold $u$ \\
  \hline
   $w$ & The model parameter \\
  \hline
   $p$ & The size of feature map is $p\times p$ \\
  \hline
   $m$ & The number of feature maps \\
  \hline
   $\mathbb{B}$ & The number of batches \\
  \hline
   IN & The instance normalization \\
  \hline
   SIN & The simplified instance normalization \\
  \hline
   $\mu_{i_1i_2}$ & The mean of feature map $X_{i_1i_2}$ \\
  \hline
   $\sigma_{i_1i_2}^{2}$ & The variance of feature map $X_{i_1i_2}$ \\
  \hline
   $H$ & The height of the feature map \\
  \hline
   $W$ & The width of the feature map \\
  \hline
   $x_{i_1i_2i_3i_4}$ & The element of feature map $X_{i_1i_2}$ \\
  \hline
   $\widehat{x_{i_1i_2i_3i_4}}$ & The new value of $x_{i_1i_2i_3i_4}$ after SIN \\
  \hline
   $\alpha$ & The order in R\'{e}nyi DP \\
  \hline
   $D_{\alpha}$ & The R\'{e}nyi divergence of order $\alpha$ \\
  \hline
   $G_{\sigma}$ & The  Gaussian mechanism with variance $\sigma^2$ \\
  \hline
   $\gamma$ & The subsampling rate \\
  \hline
\end{tabular}
\caption{Notation Table}
\label{table: Notation Table}
\end{table}

\begin{algorithm}[t!]
    \DontPrintSemicolon
	\caption{Training of DPAF}
	\label{algo: Training of DPAF}
	\textbf{Notation}: number of batches $\mathbb{B}$, mean square error loss function $\mathcal{L}_{\text{MSE}}$, binary cross-entropy loss functions $\mathcal{L}_{\text{BCE}}$, $\mathcal{L}'_{\text{BCE}}$, $\mathcal{L}''_{\text{BCE}}$, asymmetry multiplier $\mu$, number of critic iterations per generator iteration $n_{critic}$

    \tcc{the for-loop below trains the classifier $C$}
    \For{$i=1$ to $\mathbb{B}$}
    {
        compute $\mathcal{L}_\text{MSE}$ over
the $i$-th batch 

        SGD for updating conv2, conv3, and FC
        
        DPSGD($\varepsilon_1$) for updating conv1
    }
    conv1*$\leftarrow$ conv1 \tcp*{conv1* and conv1 share parameters}

    \For{$i=1$ to $\mathbb{B}$}
    {
        \tcc{computing $\mathcal{L}_\text{BCE}$ on $D$ with DPAGG($\varepsilon_3$),  $\mathcal{L}_\text{BCE}^{\prime}$ on $D$ that replaces DPAGG($\varepsilon_3$) by AGG, and $\mathcal{L}_\text{BCE}^{\prime\prime}$ on $D$ without DPAGG($\varepsilon_3$), respectively}
        compute $\mathcal{L}_\text{BCE}$ over the $i$-th batch \\
        \tcc{the code below asymmetrically trains $D$}
        \If{$i\%\mu= 0$}
       {
        compute $\mathcal{L}_\text{BCE}^{\prime}$ over the $\left[ i-\mu+1, i \right]$-th batches\\
        SGD for updating conv3* and FC* by $\mathcal{L}_\text{BCE}$\\
        DPSGD($\varepsilon_2$) for updating conv2* by $\mathcal{L}_\text{BCE}^{\prime}$
        }
        \Else
        {
         SGD for updating conv3* and FC* by $\mathcal{L}_\text{BCE}$
        }
        \tcc{the code below trains $G$}
        \If{$i\%n_{critic}= 0$}
        {
            compute $\mathcal{L}_\text{BCE}^{\prime\prime}$ over each sample from the $i$-th batch \\
            SGD for update of $G$
        }
    }
\end{algorithm}

\end{document}